\newtheorem{theorem}{Theorem}
\newtheorem{lemma}[theorem]{Lemma}
\newtheorem{assumption}{Assumption}
\newtheorem{example}{Example}
\newtheorem*{example*}{Example}
\theoremstyle{definition}
\newtheorem{definition}{Definition}
\mathchardef\mhyphen="2D 
\newcolumntype{M}[1]{>{\centering\arraybackslash}p{#1}}
\newcommand{\DP}{\mathtt{DP}}
\newcommand{\EO}{\mathtt{EqOpp}}
\newcommand{\FinShield}{\textsf{FinHzn}\xspace}
\newcommand{\StaticDP}{\textsf{Static-Fair}\xspace}
\newcommand{\StaticBAR}{\textsf{Static-BW}\xspace}
\newcommand{\Dyn}{\textsf{Dynamic}\xspace}
\newcommand{\expe}{\mathbb{E}}
\newcommand{\prob}{\mathbb{P}}
\newcommand{\RR}{\mathbb{R}}
\newcommand{\BB}{\mathbb{B}}
\newcommand{\NN}{\mathbb{N}}
\newcommand{\len}[1]{|#1|}
\newcommand{\set}[1]{\{#1\}}
\newcommand{\spec}{\varphi}
\newcommand{\numer}{\mathtt{num}}
\newcommand{\den}{\mathtt{den}}
\newcommand*{\indicator}[1]{\mathbf{1}\left\lbrace#1\right\rbrace}
\newcommand{\cost}{\mathit{cost}}
\newcommand*{\distrset}{\Delta}
\newcommand{\aG}{\mathcal{G}}
\newcommand{\aX}{\mathcal{X}}
\newcommand{\aZ}{\mathcal{Z}}
\newcommand{\aY}{\mathcal{Y}}
\newcommand{\cX}{x}
\newcommand{\cY}{y}
\newcommand{\sgen}{\theta}
\newcommand{\sagent}{\rho}
\newcommand{\sshield}{\pi}
\newcommand{\sShield}{\Pi}
\newcommand{\sShieldFeas}{\Pi_{\mathtt{fair}}}
\newcommand{\sShieldFeasPeriodic}{\Pi_{\mathtt{fair\mhyphen per}}}
\newcommand{\sShieldFeasBounded}{\Pi_{\mathtt{BW}}}
\newcommand{\sShieldFeasDyn}{\Pi_{\mathtt{fair\mhyphen dyn}}}
\newcommand{\costset}{\mathbb{C}}
\newcommand{\Trfeas}{\mathtt{FT}}
\newcommand{\Trbalance}{\mathtt{BT}}
\newcommand{\AccRate}[2]{\mathtt{AR^{#1}}(#2)}
\newcommand{\welfare}[1]{\mathtt{WF}^{#1}}
\newlength\myheight
\newlength\mydepth
\settototalheight\myheight{Xygp}
\title{Fairness Shields: Safeguarding against Biased Decision Makers}
\author {
    Filip Cano\textsuperscript{\rm 1},
    Thomas A. Henzinger\textsuperscript{\rm 2},
    Bettina K{\"o}nighofer\textsuperscript{\rm 1},
    Konstantin Kueffner\textsuperscript{\rm 2},
    Kaushik Mallik\textsuperscript{\rm 3*}
}
\begin{document}

\maketitle

\begin{abstract}
As AI-based decision-makers increasingly influence human lives, it is a growing concern that their decisions are often unfair or biased with respect to people's sensitive attributes, such as gender and race. 
Most existing bias prevention measures provide probabilistic fairness guarantees in the long run, and it is possible that the decisions are biased on specific instances of short decision sequences. 
We introduce \emph{fairness shielding}, where a symbolic decision-maker---the fairness shield---continuously monitors the sequence of decisions of another deployed black-box decision-maker, and makes interventions so that a given fairness criterion is met while the total intervention costs are minimized.
%
We present four different algorithms for computing fairness shields, among which one guarantees fairness over fixed horizons, and three guarantee fairness periodically after fixed intervals.
Given a distribution over future decisions and their intervention costs, our algorithms solve different instances of bounded-horizon optimal control problems with different levels of computational costs and optimality guarantees. 
Our empirical evaluation demonstrates the effectiveness of these shields in ensuring fairness while maintaining cost efficiency across various scenarios. 
\end{abstract}

\section{Introduction}

\footnote[1]{\noindent Part of the research was done when the author was at ISTA.}

With the increasing popularity of machine learning (ML) in human-centric decision-making tasks, including banking~\cite{liu2018delayed} and college admissions~\cite{oneto2020fairness}, it is a growing concern that the decision-makers often show biases based on sensitive attributes of individuals, like gender and race~\cite{dressel2018accuracy,obermeyer2019dissecting,scheuerman2019computers}.
Therefore, mitigating biases in ML decision-makers is an important problem and an active area of research in AI.

A majority of existing bias prevention techniques use \emph{design-time} interventions, like pre-processing the training dataset~\cite{kamiran2012data,calders2013unbiased} or tailoring the loss function used for training~\cite{agarwal2018reductions,berk2017convex}.
We propose \emph{fairness shielding}, the first \emph{run-time} intervention procedure for safeguarding fairness of already deployed decision-makers.

Fairness shields consider fairness from the \emph{sequential} decision-making standpoint, where decisions are made on individuals appearing sequentially over a period of time, and each decision may be influenced by those made in the past.
While the classical fairness literature typically evaluated fairness in a single round of decision, the sequential setting has been shown to better captures real-world decision making problems~\cite{zhang2021fairness}. 
Among the works on fairness in the sequential setting, most prior works aimed at achieving fairness in the long run~\cite{hu2022achieving}. 
Recently, the \emph{bounded-horizon} and the \emph{periodic} variants have been proposed, which are often more realistic as regulatory bodies usually assess fairness after bounded time or at regular intervals, 
such as yearly or quarterly~\cite{oneto2020fairness}.

Our fairness shields guarantee bounded-horizon and periodic fairness of deployed unknown decision-makers by monitoring each decision and minimally intervening if necessary. Fairness is guaranteed on \emph{all} runs of the system, whereas existing algorithms guarantee fairness only \emph{on average} over all runs, leaving individual runs prone to exhibit biases~\cite{pmlr-v235-alamdari24a}.

\begin{figure}[t]
    \centering
    \begin{tikzpicture}
        \draw[fill=black!10!white]   (-1,0)   rectangle   (1.2,1)   node[pos=0.5, align=center]    {Sub-symbolic\\classifier};
        \draw[fill=yellow!35!white]   (5,0)   rectangle   (6.8,1) node[pos=0.5,align=center]  {Fairness\\ shield};

        \draw[->]   (1.2,0.3)   --  node[below,align=center]    {intervention cost}  (5,0.3);
        \draw[->]   (1.2,0.6)   --  node[align=center,above]    {recommended decision\\[-0.15cm] {\scriptsize \texttt{accept?}/\texttt{reject?}}}    (5,0.6);

        \node   (a)  at   (2.5,-2)   {\fbox{\Huge\faMale}};
        \node   (b) [left=0.01cm of a]    {\Huge\faFemale};
        \node  (c) [left=0.01cm of b]   {\Huge\faFemale};
        \node[black!75!white] (d) [left=0.01cm of c]    {\Huge\faMale};
        \node[black!50!white] (e) [left=0.01cm of d]    {\Huge\faMale};
        \node[black!25!white] (f) [left=0.01cm of e]    {\Huge\faMale};

        \draw[->]   (a)  -- (2.5,-0.6)   --  (0.1,-0.6) --  (0.1,0) node[align=center]    at    (-0.4,-0.6)  {input\\ features\\[-0.15cm] {\scriptsize \texttt{gender},  \texttt{age}, ...}};
        \draw[->]   (a) --  (2.5,-0.6)  -- node[below,align=center]  {sensitive feature\\[-0.15cm] {\scriptsize \texttt{gender}}} (5.2,-0.6)  --  (5.2,0);

        \node[align=center] (x)   at  (4.2,-2)  {final decision\\[-0.15cm] {\scriptsize \texttt{accept/reject}}};
        \draw[->]   (5.9,0)   --  (5.9,-2)    --  (x);
    \end{tikzpicture}
    \caption{The operational diagram of fairness shields.}
    \label{fig:shield-schematic}
\end{figure}
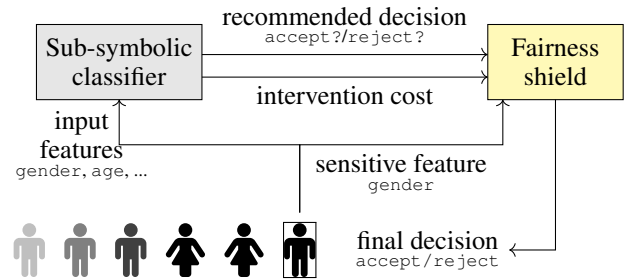

The basic functionality of fairness shields is depicted in Fig.~\ref{fig:shield-schematic}.
We assume a bounded-horizon or periodic fairness property is given, with a known time horizon or period, respectively.
For each individual appearing in sequence, the fairness shield observes the sensitive attribute, the classifier's recommendation, and the cost of changing that recommendation to a different value, where the cost is assumed to be either provided by the decision-maker or pre-specified as constants. 
The shield then makes the final decision, ensuring that the given fairness criteria will be fulfilled while the associated total intervention cost is minimized.

\begin{example}[Running example - Bilingual team]
\label{ex:running example}
Consider the task of building a customer service team in a bilingual country where both languages, $A$ and $B$, hold official status. 
To ensure high-quality service, it is essential to maintain a balanced representation of competent speakers of both languages. 
To achieve this, the company enforces a policy requiring that the difference between the number of employees proficient in each language must not exceed $10\%$ of the total team size. 
The hiring happens sequentially, where candidates apply sequentially and decisions are made on them before seeing the future candidates.
The screening of applicants is undertaken by an automated decision-making system, such as an ML model, which was designed with or without the fairness consideration in mind.
A fairness shield can be deployed, which will monitor and intervene the decisions at runtime to guarantee that the final team is linguistically balanced as required, and yet the deviations from the decision-maker's recommendations are minimal.
\end{example}




\paragraph{Computation of fairness shields.} 
Fairness shields are computed by solving bounded-horizon optimal control problems, which incorporate a \emph{hard fairness constraint} and a \emph{soft cost constraint} designed to discourage interventions. 
For the hard fairness constraint, we consider the empirical variants of standard group fairness properties, like demographic parity and equal opportunity. We require that the \emph{empirical bias remains below a given threshold} with the bias being measured either at the end of the horizon or periodically.

For the soft cost constraint, it is assumed that the shield receives a separate cost penalty for each decision modification. The shield is then required to minimize the total expected future cost, either over the entire horizon or within each period.
The definition of cost is subjective and varies by application. 
In our experiments, we consider constant costs that simply discourage high \emph{number} of interventions.
Future works will include more fine-grained cost models, such as costs proportionally varying with the classifier's confidence, thereby discouraging interventions on high-confidence decisions and possibly resulting in higher final utilities.


For shield computation, we assume that the distribution over future decisions (of the classifier) and costs are known, either from the knowledge of the model or \emph{learned} from queries.
Note that even if the distribution is learned and imprecise, the fairness guarantees provided by the shield remain unaffected; only the cost-optimality may be compromised.
Fairness shields are computed through dynamic programming:
while the straightforward approach would require exponential time and memory,
we present an efficient abstraction for the dynamic programming algorithm that reduces the complexity to only \emph{polynomial time}.

\paragraph{Types of fairness shields.}
We propose four types of shields: (i)~\FinShield, (ii)~\StaticDP, (iii)~\StaticBAR, and (iv)~\Dyn shields. 
\FinShield is specific to the bounded-horizon problem, ensuring fairness in every run while being cost-effective. 
The other three are suited for the periodic setting, 
guaranteeing fairness under mild assumptions on how rarely individuals from each group will appear in a period (formalized in Sec.~\ref{sec:unbounded}). 
 \StaticDP and \StaticBAR reuse a statically computed \FinShield shield for each period, while \Dyn shields require online re-computation of shields at the start of each period.


\paragraph{Experiments.} 
We empirically demonstrate the effectiveness of fairness shielding on various ML classifiers trained on well-known datasets.
While unshielded classifiers often show biases, their shielded counterparts are fair in \emph{every} run in the bounded-horizon setting and in most runs in the periodic setting 
(fairness may not be guaranteed in runs that don't meet the rareness assumption). 
We demonstrate that the utility loss incurred due to shielding is small in most cases.
It increases as the bias threshold gets smaller and decreases if the classifier was already trained to be fair.

\section{Shielding Fairness}
\label{sec:shielding-fairness}

\noindent\textbf{Data-driven classifier.}
Suppose we are given a population of individuals partitioned into groups $a$ and $b$, where $\aG=\set{a,b}$ are the \emph{sensitive features}, like race, gender, or language, for example.
Consider a data-driven classifier that at each step samples one individual from the population, and outputs a \emph{recommended decision} from the set $\BB=\set{1,0}$ along with an intervention cost from the finite set $\costset\subset \RR_{\geq 0}$. 
As convention, decisions ``1'' and ``0'' will correspond to ``accept'' and ``reject,'' respectively.
We assume that the sampling and classification process gives rise to a given \emph{input distribution} $\sgen\in \distrset(\aX)$, where the set $\aX\coloneqq (\aG\times\BB\times\costset)$ is called the \emph{input space}.
The non-sensitive features of individuals are hidden from the input space because they are irrelevant for shielding.
We will assume that $\sgen$ is given; in App.~\ref{sec:cost_alignment}, we discuss extensions to cases when $\sgen$ is to be estimated from data.

\begin{example}[Continuation of Ex.~\ref{ex:running example}]
\label{ex:running_example2}
    In the bilingual team example, an individual is represented by a tuple $(g, z) \in \aG \times \mathcal{Z}$, 
    where $\aG = \{a, b\}$ denotes the language that the candidate speaks,
    and $\mathcal{Z}$ encompasses all non-sensitive features relevant to evaluating a candidate's suitability for the job,
    such as years of experience, relevant education, and so on.
    For simplicity, we assume that a candidate is proficient in only one of the two languages.
    
    The company uses a classifier 
    $f \colon \aG \times \mathcal{Z} \to \BB \times \costset$, 
    which outputs a preliminary decision for each candidate (accept or reject) along with a cost associated with altering that decision. 
    The cost reflects the classifier’s confidence: candidates who are clearly good or bad incur a high cost for decision changes, 
    while borderline candidates can have their decisions reversed at a lower cost.
\end{example}

\smallskip
\noindent\textbf{Shields.}
A \emph{shield} is a symbolic decision-maker---independent from the classifier---and selects the \emph{final decision} from the \emph{output space} $\aY \coloneqq \BB$ after observing a given input from $\aX$, and possibly accounting for past inputs and outputs.
Formally, a shield is a function $\sshield\colon \left(\aX\times\aY\right)^*\times \aX \to \aY$, and its bounded-horizon variants are functions of the form $\left(\aX\times\aY\right)^{\leq t}\times \aX \to \aY$, for a given $t$.
We will write $\sShield$ and $\sShield^t$ to respectively denote the set of all shields and the set of bounded-horizon shields with horizon $t$.
The \emph{concatenation} of a sequence of shields $\sshield_1,\sshield_2,\ldots\in \sShield^t$ is a shield $\sshield$, such that for every trace $\tau$, if $\tau$ can be decomposed as $ \tau\tau'$ with $|\tau|=jt$ for some $j$ and $\tau'<t$, then $\sshield(\tau,x) \coloneqq \sshield_{j+1}(\tau',x)$.


\smallskip
\noindent\textbf{Shielded sequential decision making.}
We consider the sequential setting where inputs are sampled from $\sgen$ one at a time, and the shield $\sshield$ needs to produce an output without seeing the inputs from the future.
Formally, at every time $i=1,2,\ldots$, let $x_i=(g_i, r_i, c_i)$ be the input appearing with the probability $\sgen(\cX_i)>0$, and let the shield's output be $y_i=\sshield((x_1,y_1),\ldots,(x_{i-1},y_{i-1}),x_i)$.
The resulting finite sequence $\tau = (x_1,y_1),\ldots,(x_t,y_t)$ is called a \emph{trace} induced by $\sgen$ and $\sshield$, and the integer $t$ is called the \emph{length} of the trace, denoted as $\len{\tau}$; the notation $\Trfeas^t_{\sgen,\sshield}$ will denote the set of every such trace. 
For every $t$, the probability distribution $\sgen$ and the shield $\sshield$ induce a probability distribution $\prob(\cdot;\sgen,\sshield)$ over the set $(\aX\times\aY)^t$ as follows.
For every trace $\tau\in (\aX\times\aY)^t$, if $\tau\in\Trfeas^t_{\sgen,\sshield}$ then $\prob(\tau;\sgen,\sshield) \coloneqq \prod_{i=1}^t \sgen(\cX_i)$
and otherwise $\prob(\tau;\sgen,\sshield) \coloneqq 0$.
%
%
Given a prefix $\tau$, the probability of observing the trace $\tau\tau'$, for some $\tau'\in (\aX\times\aY)^*$, is $\prob(\tau'\mid \tau;\sgen,\sshield) = \prob(\tau\tau';\sgen,\sshield)/\prob(\tau;\sgen,\sshield)$.
(The statistical dependence of $\tau'$ on $\tau$ is due to $\sshield$'s history-dependence.)

\smallskip
\noindent\textbf{Cost.}
Let $\tau = (\cX_1,\cY_1),\ldots,(\cX_t,\cY_t)$ be a trace of length $t$, where $\cX_i = (g_i,r_i,c_i)$.
At time $i$, the shield pays the cost $c_i$ if its output $y_i$ is different from the recommended decision $r_i$.
The \emph{total} (intervention) \emph{cost incurred by the shield} on $\tau$ up to a given time $s\leq t$ is $\cost(\tau;s) \coloneqq \sum_{i=1}^s c_i\cdot\indicator{r_i\neq y_i}$.
The cost incurred up to time $t$ (the length of $\tau$) is simply written as $\cost(\tau)$, instead of $\cost(\tau;t)$.

For a given time horizon $t$, we define the expected value of cost after time $t$ as $\expe[\cost;\sgen,\sshield,t] \coloneqq \sum_{\tau\in (\aX\times\aY)^t} \cost(\tau)\cdot \prob(\tau;\sgen,\sshield)$, 
and if additionally a prefix $\tau$ is given, the conditional expected cost after time $t$ (from the end of $\tau$) is $ \expe[\cost\mid \tau;\sgen,\sshield,t] \coloneqq \sum_{\tau'\in (\aX\times\aY)^t} \cost(\tau')\cdot \prob(\tau'\mid\tau;\sgen,\sshield)$.


\begin{example}[Continuation of Ex.~\ref{ex:running_example2}]
\label{ex:running_example3}
    The shield $\pi$ is an element external to the classifier. 
    It takes the language group of the candidate and the classifier's recommendation as inputs and has the authority to issue a final accept/reject decision. 
    If the shield's decision differs from the classifier's,
    the incurred cost is as specified by the classifier.
    The shield's inputs are the features of candidates, the classifier's decisions, and the costs, and the input distribution is assumed to be known in advance.
    
    Note that, from the shield's perspective,
    the distribution of non-sensitive features is unimportant,
    as these features are already processed by the data-driven classifier and summarized into a single cost value. 
    By sampling individuals from the candidate pool and processing them through both $f$ and $\pi$, 
    we obtain a trace $\tau$ that records the individuals and their decisions. 
    This trace encapsulates the results of the hiring process, including the linguistic distribution of hired candidates and the total cost incurred by the shield.
\end{example}

\smallskip
\noindent\textbf{Fairness.}
We model (group) \emph{fairness properties} as functions that map every finite trace to a real-valued \emph{bias} level through intermediate statistics. 
A \emph{statistic} $\mu$ maps each finite trace $\tau$ to the values of a finite set of counters, represented as a vector in $\NN^p$, where $p$ is the number of counters.
The \emph{welfare} for group $g\in\set{a,b}$ is a function $\welfare{g}\colon \NN^p\to \RR$.
When $\mu$ is irrelevant or clear, we will write $\welfare{g}(\tau)$ instead of $\welfare{g}(\mu(\tau))$.
A fairness property $\varphi$ is an aggregation function mapping $(\welfare{a}(\tau),\welfare{b}(\tau))$ to a real-valued \emph{bias}.
Tab.~\ref{tab:fairness properties} summarize how existing fairness properties, namely demographic parity (DP)~\cite{dwork12}, disparate impact (DI)~\cite{feldman2015certifying}, and equal opportunity (EqOpp)~\cite{hardt2016equality} can be cast into this form.

Estimating EqOpp requires the ground truth labels of the individuals be revealed after the shield has made its decisions on them.
To accommodate ground truth, we 
introduce the set $\aZ = \set{0,1}$, such that traces are of the form $\tau = (x_1,y_1,z_1),\ldots,(x_t,y_t,z_t)\in (\aX\times\aY\times\aZ)^*$, where each $z_i$ is the ground truth label of the $i$-th individual.
The shield is adapted to $(\aX\times\aY\times\aZ)^*\times\aX\to\aY$, where the set $\aZ$ is treated as another input space and the probability distribution $P(\aZ=z_i\mid \aX=x_i)$ is assumed to be available.

\begingroup
\renewcommand{\arraystretch}{1.1} 
\begin{table}
    \centering
    \begin{tabular}{M{1cm}  M{2.2cm}  M{1cm}  M{2.7cm}}
    \toprule
        {Name} & {Counters} & $\welfare{g}$ & $\spec$ \\
        \midrule
         DP & $n_a,n_{a1},n_b,n_{b1}$ & $n_{g1}/n_g$ & $|\welfare{a}(\tau)-\welfare{b}(\tau)|$ 
         \\
         DI & $n_a,n_{a1},n_b,n_{b1}$ & $n_{g1}/n_g$  & $\left| 
     \welfare{a}(\tau)\div\welfare{b}(\tau)  \right|$ 
     \\
         EqOpp & $n_a',n_{a1}',n_b',n_{b1}'$ & $n_{g1}'/n_g'$ & $|\welfare{a}(\tau)-\welfare{b}(\tau)|$ 
         \\
         \bottomrule
    \end{tabular}
    \caption{Empirical variants of fairness properties: For $g\in\set{a,b}$, the counters $n_g$ and $n_{g1}$ represent the total numbers of individuals from group $g$ who appeared and were accepted, respectively. Counters $n_g'$ and $n_{g1}'$ denote the total numbers of appeared and accepted individuals whose ground truth labels are ``$1$.'' 
    If a welfare value is undefined due to a null denominator, we set $\spec = 0$.
    }
    \label{tab:fairness properties}
\end{table}

\endgroup

\begin{example}[Continuation of Ex.~\ref{ex:running_example3}]
\label{ex:running_example4}
    In the bilingual team example, the welfare of a linguistic group $g$ is defined as the fraction of the team proficient in language $g$, which is the empirical variant of DP. 
    A more nuanced interpretation considers the welfare of group $g$ as the fraction of accepted candidates among those proficient in language $g$. 
    This measure accounts for the possibility that the linguistic distribution of the population may not be evenly split. 
    If one language is more prevalent in the target population,
    the hired team should proportionally include more members proficient in that language. 
    The fairness property derived from this measure corresponds to an empirical version variant of EqOpp.
\end{example}

\smallskip
\noindent\textbf{Bounded-horizon fairness shields.}
From now on, 
we use the convention that $\sgen$ is the input distribution,
$\varphi$ is the fairness property, 
and $\kappa$ is the \emph{bias threshold}.
Let $T$ be a given time horizon.
The set $\sShieldFeas^{\sgen,T}$  of \emph{fairness shields over time $T$} is the set of every shield that fulfills $\spec(\cdot)\leq \kappa$ after time $T$, i.e., 
$\sShieldFeas^{\sgen,T} \coloneqq \set{ \sshield\in\sShield^T \mid \forall \tau\in \Trfeas^{T}_{\sgen,\sshield}\;.\; \spec(\tau) \leq \kappa}$.
%
%
%
%
We now define optimal bounded-horizon fairness shields as below.


\begin{definition}[\FinShield shields]
\label{def:finshield}
    Let $T>0$ be the time horizon.
    A \FinShield shield is the one that solves:
    \begin{align}\label{eq:finite horizon shield}
        \sshield^* \coloneqq \arg\min_{\sshield\in\sShieldFeas^{\sgen,T}} \expe[\cost;\sgen,\sshield,T].
    \end{align}
\end{definition}


\smallskip
\noindent\textbf{Periodic fairness shields.}
\FinShield shields stipulate that fairness be satisfied at the end of the given horizon.
However, in many situations, it may be desirable to ensure fairness not only at the end of the horizon but also at intermediate points occurring at regular intervals.
For instance, a human resources department required to maintain a fair distribution of employees over the course of a quarter
might also need to ensure a similar property for every yearly revision.
This type of fairness is referred to as \emph{periodic fairness} in the literature~\cite{pmlr-v235-alamdari24a}.
For this class of fairness properties, we define the set of $T$-periodic fairness shields as $\sShieldFeasPeriodic\coloneqq \set{ \sshield\in\sShield \mid \forall m\in \NN \;.\; \forall \tau\in \Trfeas^{mT}_{\sgen,\sshield}\;.\; \spec(\tau) \leq \kappa}$.


\begin{definition}[Optimal $T$-periodic fairness shield]
    \label{def:periodically-fair}
    Let $T>0$ be the time period.
    An \emph{optimal $T$-periodic fairness shield} is given by:
    \begin{align}\label{eq:periodic def.}
        \sshield^* \coloneqq \arg\min_{\sshield\in\sShieldFeasPeriodic}\sup_{\substack{m\in\NN\\ \tau\in \Trfeas_{\sgen,\sshield}^{mT}}} \expe[\cost\mid \tau;\sgen,\sshield,T].
    \end{align}
\end{definition}
Eq.~\eqref{eq:periodic def.} requires fairness at each $mT$-th time (measured from the beginning), and minimizes the maximum expected cost over each period. 
The existence of this minimum remains an open question. 
In Sec.~\ref{sec:unbounded}, we propose three "best-effort" approaches to compute periodically fair shields (under mild assumptions) that are as cost-optimal as possible.




\section{Algorithm for \FinShield Shield Synthesis}
\label{sec:synthesis}
We present our algorithm for synthesizing \FinShield shields as defined in Def.~\ref{def:finshield}.
A \FinShield shield $\sshield^*$ computes an output $\cY=\sshield^*(\tau,x)$ for every trace $\tau$ and every input $x$. 
Our synthesis algorithm builds $\sshield^*$ recursively for traces of increasing length, 
using an auxiliary \emph{value function} $v(\tau)$ that represents the minimal expected cost conditioned on traces with prefix $\tau$.
To define $v(\tau)$, we generalize fairness shields with the condition that a certain trace has already occurred. 
Given a time horizon $t$ and a trace $\tau$ (length can differ from $t$), the set of \emph{fairness shields over time $t$ after $\tau$} is defined as:
$\sShieldFeas^{\sgen,t \mid 
\tau} \coloneqq \set{ \sshield\in\sShield^t \mid \forall \tau'\in (\aX\times\aY)^{t}\;.\; \tau\tau'\in \Trfeas^{\len{\tau}+t}_{\sgen,\sshield}
\implies \spec(\tau\tau') \leq \kappa}$. 
Then $v(\tau)$ is given by:
\begin{align*}
    v(\tau) \coloneqq \min\limits_{\pi\in \sShieldFeas^{\sgen,(T-\len{\tau}) \mid\tau}} \expe[\cost\mid \tau;\sgen,\sshield,T-|\tau|].
\end{align*}
For every trace $\tau$ and every input $x\in \aX$, 
the optimal value of the shield is 
$\pi^*(\tau,x) = \arg\min_{\cY\in\aY} v(\tau, (\cX,\cY))$.

In Sec. 3.1, we present a recursive dynamic programming for computing $v(\tau)$, whose complexity grows exponentially with the length of $\tau$.
In Sec. 3.2, we present an efficient solution using only the $p$ counters defining the fairness property, thus solving the synthesis problem in $\mathcal{O}(T^p\cdot |\aX|)$-time.
From now on, we present the main ideas in the text, and refer the reader to App.~\ref{sec:detailed-proofs} for detailed proofs of all results.


\subsection{Recursive Computation of $v(\tau)$}
\label{sec:naive recursive solution}
\subsubsection{Base case.} Let $T$ be the time horizon and $\tau$ be a trace of length $T$.
Since the horizon is reached if $\spec(\tau)\leq \kappa$ then the expected cost is zero because fairness is already satisfied and no more cost needs to be incurred, whereas if $\spec(\tau)>\kappa$, the expected cost is infinite,
because no matter what cost is paid fairness can no longer be achieved. Formally,
\begin{equation}\label{eq:v-basecase}
 v(\tau) = 
        \begin{cases}
            0   & \spec(\tau)\leq\kappa,\\
            \infty  &   \text{otherwise}.
        \end{cases}
\end{equation}

\subsubsection{Recursive case.}
Let $\tau$ be a trace of length smaller than $T$.
The probability of the next input being $x=(g,r,c)$ is $\sgen(x)$, and the shield decides to output $\cY$ that either agrees with the recommendation $r$ (the case $y=r$) or differs from it (the case $y\neq r$)---whichever minimizes the expected cost. 
When $y=r$, then the trace becomes $(\tau, (x,y=r))$. Therefore, no cost is incurred and the total cost remains the same as $v(\tau,(x,y=r))$. 
When $y\neq r$, the trace becomes $(\tau, (x,y\neq r))$. Thus, the incurred cost is $c$ and the new total cost becomes $c+v(\tau,(x,y=r))$. Therefore
\begin{equation}\label{eq:v-recursion}
    \! v(\tau) = \sum_{\cX = (g,r, c)\in\aX} \sgen(\cX)\cdot \min\left\lbrace\begin{matrix}
        v(\tau,(x,y=r)),\\
        v(\tau,(x,y\neq r)) + c
    \end{matrix}\right\rbrace.
\end{equation}
Eqs.~\eqref{eq:v-basecase} and \eqref{eq:v-recursion} can be used to recursively compute $v(\tau)$ for every $\tau$ of length up to $T$, and the time and space complexity of this procedure is $\mathcal{O}(|\aX\times\aY|^T)$.
The correctness of Eq.~\eqref{eq:v-recursion} is formally proven in App.~\ref{sec:detailed-proofs}, Lem.~\ref{lem:v-recursion}.

\subsection{Efficient Recursive Computation of $v(\tau)$}
\label{sec:synthesis:efficient}
We now present an efficient recursive procedure for computing \FinShield shields that runs in polynomial time and  space. 
The key observation is that $\spec$ is a fairness property that depends on $\tau$ through a statistic that uses $p$ counters.
Consequently, $v(\tau)$ in Eq.~\eqref{eq:v-basecase} and Eq.~\eqref{eq:v-recursion} depend only on counter values, not on exact traces. 
This allows us to define our dynamic programming algorithm over the set of counter values taken by the statistic $\mu$.
Let $R_{\mu,T} \subseteq \mathbb{N}^p$ be the set of values the statistic $\mu$ can take from traces of length at most $T$.
We have the following complexity result.
\begin{theorem}\label{thm:bounded-horizon shield synthesis-bis}
    The bounded-horizon shield-synthesis problem 
 can be solved in $\mathcal{O}(|R_{\mu, T}|\cdot |\aX|)$-time and $\mathcal{O}(|R_{\mu,T}|\cdot |\aX|)$-space.
\end{theorem}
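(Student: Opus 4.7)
The plan is to exploit the fact, already signalled in the paragraph preceding the theorem, that the value function $v(\tau)$ depends on the trace $\tau$ only through the counter vector $\mu(\tau)\in\NN^p$. Once this invariance is established, the exponential recursion of Section~3.1 collapses onto a dynamic program over the finite set $R_{\mu,T}$, and both the time and the space bounds follow by a direct counting argument.

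First I would prove, by downward induction on $|\tau|$ from $T$ to $0$, the invariance claim: if $\mu(\tau_1)=\mu(\tau_2)$ then $v(\tau_1)=v(\tau_2)$. The base case ($|\tau|=T$) is immediate from Eq.~\eqref{eq:v-basecase} because $\spec(\tau)$ is, by definition of a fairness property in Sec.~2, a function of $\mu(\tau)$ alone. For the inductive step I need one small structural fact about the counters used in Tab.~\ref{tab:fairness properties}: the update $\mu(\tau,(x,y)) - \mu(\tau)$ depends only on $(x,y)$, not on $\tau$. In other words, $\mu$ is an incremental (monoid-homomorphic) statistic, and this can be verified by inspection for DP, DI, and EqOpp. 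Combining this with the induction hypothesis shows that both terms inside the $\min$ in Eq.~\eqref{eq:v-recursion} depend on $\tau$ only through $\mu(\tau)$, and hence so does $v(\tau)$. Note that $|\tau|$ is itself recoverable from $\mu(\tau)$ (e.g.\ $n_a+n_b$ for DP), so we need not carry the trace length as an extra argument.

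With invariance in hand I would introduce the abstracted value function $\tilde v\colon R_{\mu,T}\to \RR_{\geq 0}\cup\{\infty\}$, defined so that $\tilde v(\mu(\tau))=v(\tau)$, and rewrite Eqs.~\eqref{eq:v-basecase}--\eqref{eq:v-recursion} as a recursion purely on $R_{\mu,T}$: for counter vectors corresponding to length $T$, $\tilde v(n)=0$ if $\spec(n)\leq \kappa$ and $\infty$ otherwise; for $n$ corresponding to length $<T$,
\begin{equation*}
  \tilde v(n) = \sum_{x=(g,r,c)\in\aX}\sgen(x)\min\!\bigl\{\tilde v(n\oplus\delta(x,r)),\; \tilde v(n\oplus\delta(x,\lnot r))+c\bigr\},
\end{equation*}
where $\delta(x,y)$ is the (constant) counter update induced by $(x,y)$. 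The algorithm visits elements of $R_{\mu,T}$ in order of decreasing implicit length and fills $\tilde v$; the shield is then read off by taking, for each pair $(n,x)$, the $y\in\aY$ that realises the $\min$.

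Finally, the complexity analysis is a bookkeeping step. Each of the $|R_{\mu,T}|$ entries of $\tilde v$ is computed by iterating over the $|\aX|$ input symbols and performing $\mathcal{O}(1)$ work per symbol (two table lookups, a comparison, and a scalar multiplication), giving the $\mathcal{O}(|R_{\mu,T}|\cdot|\aX|)$ time bound. Storing $\tilde v$ costs $\mathcal{O}(|R_{\mu,T}|)$, and storing the synthesized shield as a table indexed by $(n,x)\in R_{\mu,T}\times\aX$ costs $\mathcal{O}(|R_{\mu,T}|\cdot|\aX|)$, matching the claimed space bound. The only place where I expect subtlety is the invariance argument: it quietly relies on the incremental structure of $\mu$, and on the fact that $|\tau|$ is encoded in $\mu(\tau)$, so I would state these as explicit properties of the fairness statistics under consideration before invoking the induction.
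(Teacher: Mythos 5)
Your proposal follows the same approach as the paper's proof: abstract traces to their counter vectors, observe that the value recursion factors through $\mu$, and run the dynamic program over $R_{\mu,T}$ rather than over raw traces, with the time and space bounds following from the table size and per-entry work. You are in fact more careful than the paper's own proof, which merely asserts that $v(\tau)$ "depends only on counter values"; you make the invariance precise via a downward induction and correctly isolate the two structural facts it rests on---that the counter update $\mu(\tau,(x,y))-\mu(\tau)$ depends only on $(x,y)$, and that $|\tau|$ is recoverable from $\mu(\tau)$---subtleties the paper leaves implicit (the second is handled only indirectly, by including extra counters in the EqOpp statistic in the appendix's resource discussion).
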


In most fairness properties like DP and EqOpp, 
the range of values they can take is $R_{\mu, T} = [0, T]^p$, 
where $p$ is the number of counters ($p=4$ for DP, and $p=5$ for EqOpp), 
making the complexity polynomial in the length of the time horizon.

\section{Algorithms for Periodic Shield Synthesis}
\label{sec:unbounded}

We present algorithms for computing periodic fairness shields for a broad subclass of group fairness properties, termed \emph{difference of ratios} (DoR) properties.
A statistic $\mu$ is \emph{single-counter} if it maps every trace $\tau$ to a single counter value, i.e., $\mu(\tau) \in \NN$, and \emph{additive} if $\mu(\tau\tau') = \mu(\tau) + \mu(\tau')$ for any traces $\tau$ and $\tau'$. A group fairness property $\spec$ is DoR if (a) for each group $g$, $\welfare{g}(\tau) = \numer^g(\tau)/\den^g(\tau)$, where $\numer^g(\tau)$ and $\den^g(\tau)$ are additive single-counter statistics, and (b) $\spec(\tau) = |\welfare{a}(\tau) - \welfare{b}(\tau)|$.
Many fairness properties, including DP and EqOpp, are DoR, though DI is not because it violates the condition (b). For DoR fairness properties, we propose two approaches for constructing periodic fairness shields: \emph{static} and \emph{dynamic}, and we explore their respective strengths and weaknesses.

\subsection{Periodic Shielding: The Static Approach}
In the static approach, 
a periodic shield is obtained 
by \emph{concatenating infinitely many identical copies of a statically computed bounded-horizon shield} $\sshield$, synthesized with the time period $T$ as the horizon.
We present two ways of computing $\sshield$ so that its infinite concatenation is $T$-periodic fair.
%
%
%

\subsubsection{Approach I: \StaticDP shields.}



\begin{definition}[\StaticDP shields]
    A shield is called \StaticDP if it is the concatenation of infinite copies of a \FinShield shield (from Def.~\ref{def:finshield}).
\end{definition}
%
Unfortunately, \StaticDP shields do not always satisfy periodic fairness.
Consider a trace $\tau=\tau_1\ldots\tau_m$ for an arbitrary $m>0$, generated by a \StaticDP shield, such that each segment $\tau_i$ is of length $T$.
It follows from the property of \FinShield shields that $\spec(\tau_i) \leq \kappa$ for each individual $i$.
However, $T$-periodic fairness may be violated because $\spec(\tau)$ need not be bounded by $\kappa$.
A concrete counter-example for DP is shown below; more examples are in App.~\ref{sec:counterexamples-static-fair}.

\begin{example}
\label{ex:counter-example-periodic-naive-bis}
    Consider DP with $0<\kappa<1-2/T$.
    Suppose $\tau_1$ and $\tau_2$ are traces of length $T$ such that for $\tau_1$, $n_a=1$,$n_b=T-1$, and $n_{a1}=n_{b1}=0$, and for $\tau_2$, $n_a=n_b=T$, $n_{a1}=T$, and $n_{b1}=1$. Then $\spec(\tau_1)=\spec(\tau_2)=0$ (fair), but $\spec(\tau_1\tau_2) = |(T-1)/T-1/T |=1-2/T > \kappa$ (biased).
\end{example}

An important feature of these counter-examples is the excessive skewness of appearance rates across the two groups.
We show that \StaticDP shields are $T$-periodic fair if the appearance rates of the two groups are equal at every period.


\begin{definition}[Balanced traces]
    Let $\mu^a,\mu^b\colon (\aX\times\aY)^*\to \NN$ be a pair of group-dependent (single-counter) statistics, $T>0$ be a given time horizon, and
    $N\leq T/2$ be a given integer.
    A trace $\tau$ of length $T$ is \emph{$N$-balanced with respect to} $\mu^a$ and $\mu^b$ if both $\mu^a(\tau) \geq N$ and $\mu^b(\tau)\geq N$; the set of all such traces is written $\Trbalance^T(\mu^a,\mu^b,N)$.
\end{definition}

\begin{theorem}[Conditional correctness of \StaticDP shields]
\label{thm:static shield with bounded DP}
    Let $\spec$ be a DoR fairness property. Consider a \StaticDP shield $\sshield$, and let $\tau = \tau_1 \ldots \tau_m \in \Trfeas^{mT}_{\sgen, \sshield}$ be a trace such that $\len{\tau_i} = T$ for all $i \leq m$. If $\den^a(\tau_i) = \den^b(\tau_i)$ for every $i \leq m$, then the fairness property $\spec(\tau) \leq \kappa$ is guaranteed.
\end{theorem}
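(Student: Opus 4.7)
The plan is to exploit two features simultaneously: the per-segment fairness guarantee inherited from the underlying \FinShield shield, and the additivity of the numerator/denominator statistics in the DoR definition. I would first observe that since $\sshield$ is a \StaticDP shield, each segment $\tau_i$ is a trace of a \FinShield shield over horizon $T$, so by Def.~\ref{def:finshield} and the definition of $\sShieldFeas^{\sgen,T}$ we have $\spec(\tau_i) \leq \kappa$ for every $i \leq m$. Unrolling the DoR definition, this gives $|\numer^a(\tau_i)/\den^a(\tau_i) - \numer^b(\tau_i)/\den^b(\tau_i)| \leq \kappa$ for each $i$.

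Next, I would use additivity of $\numer^g$ and $\den^g$ to express the global welfare difference in terms of the per-segment ones. Writing $D_i \coloneqq \den^a(\tau_i) = \den^b(\tau_i)$ (which is well-defined by hypothesis) and $\delta_i \coloneqq \welfare{a}(\tau_i) - \welfare{b}(\tau_i)$, additivity yields
\begin{align*}
\welfare{a}(\tau) - \welfare{b}(\tau)
&= \frac{\sum_i \numer^a(\tau_i)}{\sum_i \den^a(\tau_i)} - \frac{\sum_i \numer^b(\tau_i)}{\sum_i \den^b(\tau_i)} \\
&= \frac{\sum_i \bigl(\numer^a(\tau_i) - \numer^b(\tau_i)\bigr)}{\sum_i D_i}
= \frac{\sum_i D_i\, \delta_i}{\sum_i D_i},
\end{align*}
where the second equality crucially uses $\den^a(\tau_i) = \den^b(\tau_i) = D_i$ to merge the two fractions.

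The final step is to recognize the right-hand side as a convex combination of the $\delta_i$'s with nonnegative weights $D_i$. Applying the triangle inequality and the per-segment bound $|\delta_i| \leq \kappa$ gives
\begin{align*}
|\welfare{a}(\tau) - \welfare{b}(\tau)|
\leq \frac{\sum_i D_i\, |\delta_i|}{\sum_i D_i}
\leq \kappa,
\end{align*}
which is exactly $\spec(\tau) \leq \kappa$. A minor edge case to handle is when $\sum_i D_i = 0$: in that case every $D_i$ is zero, every $\welfare{g}(\tau_i)$ is undefined, and $\spec(\tau)$ is set to $0$ by the convention in Tab.~\ref{tab:fairness properties}, so the bound holds trivially.

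I do not anticipate a serious obstacle here: the shape of the argument is essentially forced once one observes that equal denominators turn the aggregation into a weighted average of segmentwise biases. The only subtlety worth flagging is making explicit that the common-denominator assumption is what allows the two segment-sums to be combined into a single fraction; without it, the cross-term $\sum_i \numer^a(\tau_i)\den^b(\tau_i) - \numer^b(\tau_i)\den^a(\tau_i)$ does not simplify and indeed Ex.~\ref{ex:counter-example-periodic-naive-bis} shows the conclusion can fail.
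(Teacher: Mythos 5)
Your proof is correct and follows essentially the same approach as the paper: exploit per-segment fairness of the underlying \FinShield shield, use additivity of $\numer^g$ and $\den^g$, and use the common-denominator hypothesis to collapse the global bias into a weighted average of per-segment biases, which the triangle inequality then bounds by $\kappa$. Your version is in fact slightly cleaner and more faithful to the stated theorem: the paper's proof specializes to the case $\den^a(\tau_i)=\den^b(\tau_i)=T/2$ (which is forced for DP but not for every DoR property) and stops after bounding $\bigl|\sum_i(\numer^a(\tau_i)-\numer^b(\tau_i))\bigr|\leq m\kappa T/2$, leaving the final division by $\den^a(\tau)=mT/2$ implicit; your weighted-average formulation handles arbitrary equal denominators $D_i$ directly and spells out that last step, and you also flag the degenerate $\sum_i D_i=0$ edge case, which the paper does not.
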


While the condition in Thm.~\ref{thm:static shield with bounded DP} appears conservative, 
we show in App.~\ref{sec:counterexamples-static-fair} that it is in fact tight for the worst-case satisfaction of DP, 
in the sense that for every $\kappa$, there exist $m$ and $\lfloor(T-1)/2\rfloor$-balanced traces $\tau_1,\dots,\tau_m$ such that $\spec_\DP(\tau_i)\leq \kappa$ for each $i$, but $\spec_\DP(\tau_1\dots\tau_m) > \kappa$.
%
However, these are worst-case scenarios and are ``uninteresting.'' 
In our experiments, \StaticDP shields
fulfill periodic fairness in a majority of cases even if the condition of Thm.~\ref{thm:static shield with bounded DP} is violated.


\subsubsection{Approach II: \StaticBAR shields.}
When the condition of Thm.~\ref{thm:static shield with bounded DP} is violated, \StaticDP shields cannot guarantee fairness as the bound on the bias is not closed under concatenation of traces (see Ex.~\ref{ex:counter-example-periodic-naive-bis}).
A stronger property that is closed under concatenation is when a bound is imposed on each group's welfare. 
Let $l,u$ be constants with $0\leq l<u\leq 1$.
A trace $\tau$ has \emph{bounded welfare} (BW) if for each group $g\in\aG$,
$\welfare{g}(\tau)=\numer^g(\tau)/\den^g(\tau)$ belongs to $[l,u]$.
The pair $(l,u)$ will be called \emph{welfare bounds}.
We show that BW is closed under trace concatenations, which depends on the additive property of $\numer^g$ and $\den^g$.


\begin{lemma}
    \label{prop:acc-rates}
    Let $(l,u)$ be given welfare bounds, and $\welfare{g}(\cdot)\equiv \numer^g(\cdot)/\den^g(\cdot)$ for additive $\numer^g,\den^g$.
    For a trace $\tau = \tau_1\dots \tau_m$,
    if for each $i$, 
    $\welfare{g}(\tau_i) \in [l,u]$, 
    then 
    $\welfare{g}(\tau) \in [l,u]$.
\end{lemma}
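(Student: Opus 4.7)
The plan is to reduce the claim to the classical mediant inequality, leveraging the additivity of $\numer^g$ and $\den^g$. First I would fix a group $g \in \aG$ and unroll the definitions: by additivity of the two single-counter statistics, we have $\numer^g(\tau) = \sum_{i=1}^m \numer^g(\tau_i)$ and $\den^g(\tau) = \sum_{i=1}^m \den^g(\tau_i)$, so the welfare of the whole trace is a ratio of sums of per-segment numerators and denominators.

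Next, I would translate the per-segment welfare bound $\welfare{g}(\tau_i) \in [l,u]$ into the equivalent linear double inequality
\begin{equation*}
    l \cdot \den^g(\tau_i) \;\leq\; \numer^g(\tau_i) \;\leq\; u \cdot \den^g(\tau_i),
\end{equation*}
which is valid whenever $\den^g(\tau_i) > 0$ and is trivially true (both sides zero) when $\den^g(\tau_i) = 0$, using the convention on undefined welfare already adopted in Tab.~\ref{tab:fairness properties}. Summing these inequalities over $i = 1, \ldots, m$ and using additivity yields $l \cdot \den^g(\tau) \leq \numer^g(\tau) \leq u \cdot \den^g(\tau)$. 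Assuming $\den^g(\tau) > 0$, dividing through gives $\welfare{g}(\tau) \in [l, u]$, as required. Since $g$ was arbitrary, the conclusion holds for both groups.

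\textbf{Anticipated obstacle.} The only subtle point is the corner case $\den^g(\tau) = 0$, which forces $\den^g(\tau_i) = 0$ for every $i$ (since denominators are nonnegative integer counters). In this case $\welfare{g}(\tau)$ is not defined by the usual ratio, and one must appeal to the same boundary convention under which the hypothesis $\welfare{g}(\tau_i) \in [l,u]$ is granted in the vacuous case. Aside from this bookkeeping, no computation beyond the mediant inequality is needed, so the proof is short and self-contained.
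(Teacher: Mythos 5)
Your proof is correct, and it takes a genuinely different route from the paper. The paper first establishes a general auxiliary lemma --- the mediant inequality $\min_i (a_i/b_i) \leq (\sum_i a_i)/(\sum_i b_i) \leq \max_i (a_i/b_i)$ --- proved by induction on $m$ from the two-term case, and then applies it together with the observation that every per-segment welfare lies in $[l,u]$. You instead clear denominators to rewrite each per-segment bound as the pair of linear inequalities $l\cdot\den^g(\tau_i) \leq \numer^g(\tau_i) \leq u\cdot\den^g(\tau_i)$, sum them using additivity, and divide once at the end. This avoids both the induction and the min/max detour, yielding a one-step argument. Your version is also more careful at the boundary than the paper's: you explicitly note that the linear form of the inequality remains valid when $\den^g(\tau_i)=0$ (both sides vanish, since $\numer^g(\tau_i)\leq\den^g(\tau_i)$), whereas the paper's auxiliary lemma is stated only for strictly positive $a_i,b_i$ and its application to segments with a zero denominator is left implicit. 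The paper's route has the minor advantage of isolating the mediant inequality as a reusable standalone fact, but for the lemma at hand your direct computation is shorter, more elementary, and more self-contained.
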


For DoR properties, BW implies fairness when $u-l\leq \kappa$.
Combining this with Lem.~\ref{prop:acc-rates}, we infer that if $\sshield$ is a bounded-horizon shield that fulfills BW on every trace $\tau$ of length $T$ for welfare bounds $(l,u)$ with $u-l\leq \kappa$, then the concatenation of infinite copies of $\sshield$ would be a $T$-periodic fairness shield.
The natural course of action for computing shields 
that fulfill BW is to 
mimic Def.~\ref{def:finshield}, 
replacing the condition on $\spec$ with a condition on welfare. 
However, if we define the set of BW-fulfilling shields as
$
     \sShieldFeasBounded^{\sgen,T}\coloneqq  \set{\sshield\in\sShield \mid \forall \tau \in \Trfeas^T_{\sgen, \sshield} \;.\;
    \forall g\in \set{a,b}\;.\;
    l\leq \welfare{g}(\tau) \leq u }
$,
the set $\sShieldFeasBounded^{\sgen,T}$ can be empty for some $T,l,u$.
Following is an example.
\begin{example}\label{ex:bounded-acc-rates}
    Suppose $\welfare{g}(\tau) = n_{g1}/n_g$, where $n_{g1}$ and $n_g$ are the total numbers of accepted and appeared individuals from group $g$ (as in DP).
    Suppose $T=2, l=0.2,u=0.4$.
    It is easy to see that no matter what the shield does, for every $\tau$ of length $2$, $\welfare{g}(\tau)\in \set{0,0.5,1}$.
    Therefore, $\Pi^{2}_{[0.2,0.4]}=\emptyset$.
\end{example}
The emptiness of $\sShieldFeasBounded^{\sgen,T}$ is due to a large disparity between the appearance rates of individuals from the two groups, which occurs for shorter time horizons and for datasets where one group has significantly lesser representation than the other group.
To circumvent this technical inconvenience, 
we make the following assumption on observed traces.



\begin{assumption}\label{assump:static-BAR}
    Let $l,u$ be welfare bounds, and $\tau = \tau_1\ldots\tau_m\in \Trfeas^{mT}_{\sgen,\sshield}$ be a trace with $|\tau_i|=T$ for each $i$.
    Every $\tau_i$ is $N$-balanced w.r.t.\ $\den^a$ and $\den^b$ for $N= \left\lceil 1/(u-l)\right\rceil$.
\end{assumption}

Assump.~\ref{assump:static-BAR} may be reasonable depending on $l$, $u$, $T$, and the input distribution $\theta$. 
Intuitively, for a larger $T$ and a smaller skew of  appearance probabilities for individuals between the two groups, the probability of fulfilling Assump.~\ref{assump:static-BAR} is larger (for a given finite $m$).
In App.~\ref{sec:condition-on-BAR-shields-existing}, 
we quantify this probability as the probability of a sample from a binomial distribution lying between $N$ and $T-N$.
%


\begin{definition}[\StaticBAR shields]
    Let $l,u$ be given welfare bounds, and $T$ be a given time period.
    A \StaticBAR shield is the concatenation of infinite copies of the shield $\sshield^*$ solving
    \begin{align}\label{eq:static-BAR-shield optimality}
        \sshield^* = \arg\min_{\sshield\in\sShieldFeasBounded^{\sgen,T, N}} \expe[\cost;\sgen,\sshield,T], 
    \end{align}
    where $N= \left\lceil 1/(u-l)\right\rceil$, and
    \begin{multline*}
     \sShieldFeasBounded^{\sgen,T,N}\coloneqq  \set{\sshield\in\sShield \mid \forall \tau \in \Trfeas^T_{\sgen, \sshield} \cap \Trbalance_N^T 
     \\ 
    \forall g\in \set{a,b}\;.\;
    l\leq \welfare{g}(\tau) \leq u }.
\end{multline*}
\end{definition}

In App.~\ref{sec:detailed-proofs} (Lem.~\ref{thm:acc-rates-existence}), we provide a constructive proof showing that $\sShieldFeasBounded^{\sgen,T,N}$ is indeed non-empty when Assump.~\ref{assump:static-BAR} is fulfilled.
This result guarantees that the optimization problem in \eqref{eq:static-BAR-shield optimality} is feasible, 
and thus \StaticBAR shields are well-defined.
Intuitively, we obtain a ``best-effort'' solution for $\sshield^*$: 
when a trace satisfies Assump.~\ref{assump:static-BAR}, 
$\sshield^*$ guarantees that $\tau$ satisfies BW with minimum expected cost. 
Otherwise, $\sshield^*$ has no BW requirement, 
and thus for traces that violate Assump.~\ref{assump:static-BAR}, the shield will incur zero cost by never intervening.

\paragraph{Synthesis of \StaticBAR shields}
follows the same approach as in Sec.~\ref{sec:synthesis} with Eq.~\eqref{eq:v-basecase} replaced by: 
 \begin{align*}
 v(\tau) = 
        \begin{cases}
            0   & \tau\notin \Trbalance_N^T\lor \bigwedge_{g\in\set{a,b}} \welfare{a}(\tau)\in[l,u],\\
            \infty  &   \text{otherwise}.
        \end{cases}
\end{align*}

We summarize the fairness guarantee below.

\begin{theorem}[Conditional correctness of \StaticBAR shields]
\label{thm:correctness-staticBW}
    Let $\spec$ be a DoR fairness property.
    Let $l,u$ be welfare bounds such that $u-l\leq \kappa$.
    For a given \StaticBAR shield $\sshield$, 
    let $\tau = \tau_1\ldots\tau_m\in \Trfeas^{mT}_{\sgen,\sshield}$ be a trace with $\len{\tau_i}=T$ for each $i\leq m$.
    If Assump.~\ref{assump:static-BAR} holds, then the fairness property $\spec(\tau)\leq \kappa$ is guaranteed.
\end{theorem}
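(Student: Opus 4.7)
The plan is to chain three ingredients: the defining constraint of \StaticBAR shields, the welfare-closure property of Lem.~\ref{prop:acc-rates}, and the DoR form of $\spec$.

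First, I would unpack Assump.~\ref{assump:static-BAR}: each segment $\tau_i$ is $N$-balanced with $N = \lceil 1/(u-l)\rceil$, so $\tau_i \in \Trbalance_N^T$. By definition, $\sshield$ is the concatenation of infinite copies of some $\sshield^* \in \sShieldFeasBounded^{\sgen,T,N}$, and the semantics of concatenation (from Sec.~\ref{sec:shielding-fairness}) implies that the $i$-th segment of the induced trace is a length-$T$ trace of $\sshield^*$ starting from the empty history, i.e., $\tau_i \in \Trfeas^T_{\sgen,\sshield^*}$. Combined with $\tau_i \in \Trbalance_N^T$, the defining constraint of $\sShieldFeasBounded^{\sgen,T,N}$ forces $l \leq \welfare{g}(\tau_i) \leq u$ for each $g\in\{a,b\}$ and each $i \leq m$.

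Second, because $\spec$ is DoR, each welfare decomposes as $\welfare{g}(\tau) = \numer^g(\tau)/\den^g(\tau)$ with $\numer^g, \den^g$ additive single-counter statistics. I would then invoke Lem.~\ref{prop:acc-rates} once for each group, applied to the decomposition $\tau = \tau_1 \ldots \tau_m$, concluding that $\welfare{g}(\tau) \in [l,u]$ for both $g = a$ and $g = b$. Finally, using the DoR form $\spec(\tau) = |\welfare{a}(\tau) - \welfare{b}(\tau)|$ and the fact that two numbers in $[l,u]$ differ in absolute value by at most $u-l \leq \kappa$, one obtains $\spec(\tau) \leq \kappa$.

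The main obstacle I anticipate is not a technical computation but a definitional alignment: one must verify that each $\tau_i$ is legitimately a trace induced by the \emph{per-period} shield $\sshield^*$ rather than by the concatenated shield $\sshield$, so that membership in $\Trfeas^T_{\sgen,\sshield^*}$ (and hence the per-period welfare bound) is licensed. This reduces to the observation that $\sshield$'s behaviour within segment $i$ depends only on the local history of length less than $T$, which is exactly the reset semantics of concatenation. Once this bookkeeping is settled, the proof is a short chain of implications built on top of Lem.~\ref{prop:acc-rates}, which carries all the substantive content.
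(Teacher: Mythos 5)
Your proposal is correct and follows essentially the same route as the paper's own proof: invoke Lem.~\ref{prop:acc-rates} to close bounded welfare under concatenation of segments, then deduce $\spec(\tau) \leq \kappa$ from the DoR form and $u - l \leq \kappa$. In fact you spell out more of the bookkeeping than the paper does (the paper simply cites Lem.~\ref{thm:acc-rates-existence} for non-emptiness and Lem.~\ref{prop:acc-rates} for the closure), in particular you make explicit the chain from Assump.~\ref{assump:static-BAR} and the defining set $\sShieldFeasBounded^{\sgen,T,N}$ to per-segment welfare bounds via the reset semantics of concatenation, which the paper leaves implicit.
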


\subsection{Periodic Shielding: The Dynamic Approach}

While the static approaches repeatedly use one statically computed bounded-horizon shield, the dynamic approach recomputes a new bounded-horizon shield at the beginning of each period, and thereby adjusts its future decisions based on the past biases.
We formalize this below.

\begin{definition}[\Dyn shields]
\label{def:dynamic-shield}
    Suppose we are given a parameterized set of \emph{available} shields $\sShield'(\tau)\subseteq \sShield$ where the parameter $\tau$ ranges over all finite traces.
    A \Dyn shield $\sshield$ is the concatenation of a sequence of shields $\sshield_1,\sshield_2,\ldots$ such that for every trace $\tau\in \Trfeas_{\sgen,\sshield}^{mT}$ with $m\geq 0$, for every $\tau'\in (\aX\times\aY)^{<T}$, and for every input $x\in\aX$, $\sshield(\tau\tau',x) = \sshield_{m+1}(\tau',x)$ where
    \begin{align}\label{eq:dyn-shield optimality}
        \sshield_{m+1} = \arg\min_{\sshield'\in\sShield'(\tau)} \expe[\cost\mid \tau;\sgen,\sshield',T].
    \end{align}
\end{definition}

The set $\sShield'(\tau)$ restricts the available set of shields that can be used for the next period for the given history $\tau$.
A na\"ive attempt for $\sShield'(\tau)$ would be to choose $\sShield'(\tau)=\sShieldFeas^{\sgen,T\mid\tau}$ for every $\tau$, so that fairness is guaranteed at the end of the current period. 
However, there exist histories for which $\sShieldFeas^{\sgen,T\mid\tau}$ would be empty (see Ex.~\ref{ex:buffered} in App.~\ref{sec:reflections}), implying that Eq.~\eqref{eq:dyn-shield optimality} would not have a feasible solution for some $\tau$, and the \Dyn shield would exhibit undefined behaviors.
To circumvent this technical inconvenience, we make the following mild assumption on the set of allowed histories, requiring $\sShield'(\tau)$ to fulfill fairness only if $\tau$ fulfills this assumption.

\begin{assumption}\label{assump:dynamic}
    For a given trace $\tau\in \Trfeas_{\sgen,\sshield}^{jT}$ with $j>0$, every valid suffix $\tau' $ of length $t$ (i.e., $\tau'\in \set{\tau''\in (\aX\times\aY)^T\mid \tau\tau''\in\Trfeas_{\sgen,\sshield}^{(j+1)T}}$) fulfills:
    \begin{align*}
        \frac{1}{\den^{a}(\tau\tau')} + \frac{1}{\den^b(\tau\tau')} \leq \kappa + \spec(\tau).
    \end{align*}
\end{assumption}

The set of shields $\sShield'(\cdot)$ available to the \Dyn shield in Def.~\ref{def:dynamic-shield} is then defined as:
\begin{align*}
    \sShield'(\tau) = \sShieldFeasDyn^{\sgen,T}(\tau) \coloneqq  
    \begin{cases}
        \sShieldFeas^{\sgen,T\mid\tau}    &   \tau \text{ fulfills Assump.~\ref{assump:dynamic}},\\
        \sShield                &   \text{otherwise}.
    \end{cases}
\end{align*}
We prove that $\sShieldFeas^{\sgen,T\mid\tau}$ is non-empty whenever $\tau$ fulfills Assump.~\ref{assump:dynamic}  (see App.~\ref{sec:detailed-proofs}, Lem.~\ref{lem:buff-shield-exists}), implying that $\sShieldFeasDyn^{\sgen,T}(\tau)$ is non-empty for every $\tau$.
Technically, this guarantees that the optimization problem in \eqref{eq:dyn-shield optimality} is feasible and $\sshield_{m+1}$ always exists and \Dyn shields are well-defined.
Intuitively, we obtain a ``best-effort'' solution: If Assump.~\ref{assump:dynamic} is fulfilled then $\sshield_{m+1}$ is in $ \sShieldFeas^{\sgen,T\mid\tau}$ and achieves fairness for the minimum expected cost.
Otherwise, $\sshield_{m+1}$ can be any shield in $\sShield$ that only optimizes for the expected cost; in particular, $\sshield_{m+1}$ will be the trivial shield that never intervenes (has zero cost).

\paragraph{Synthesis of \Dyn shields} involves computing the sequence of shields $\sshield_1,\sshield_2,\ldots$, which are to be concatenated.
We outline the algorithm below.
\begin{enumerate}
    \item Generate a \FinShield shield (Def.~\ref{def:finshield}) $\sshield$ for the property $\spec$ and the horizon $T$. Set $\sshield_1\coloneqq \sshield$.
    \item For $i\geq 1$, let $\sshield$ be the concatenation of the shields $\sshield_1,\ldots,\sshield_i$, and let $\tau\in \Trfeas_{\sgen,\sshield}^{iT}$ be the generated trace. Compute $\sshield_{i+1}$ that uses the same approach as in Sec.~\ref{sec:synthesis} with Eq.~\eqref{eq:v-basecase} being replaced by:
        \begin{align*}
            v(\tau') = 
                \begin{cases}
                    0   &   \spec(\tau\tau')\leq \kappa\\
                    \infty & \text{otherwise.}
                \end{cases}
        \end{align*}
\end{enumerate}
We summarize the fairness guarantee below.
\begin{theorem}[Conditional correctness of \Dyn shields]
\label{thm:correctness-dyn-shields}
    Let $\spec$ be a DoR fairness property.
    Let $\sshield$ be a  \Dyn shield that uses $\sShieldFeasDyn^{\sgen,T}(\cdot)$ as the set of available shields.
    Let $\tau =  \tau_1\ldots\tau_m\in \Trfeas^{mT}_{\sgen,\sshield}$ be a trace with $\len{\tau_i}=T$ for each $i\leq m$.
    Suppose for every $i\leq m$, $\tau_1\ldots\tau_i$ fulfills Assump.~\ref{assump:dynamic}.
    Then the fairness property $\spec(\tau)\leq \kappa$ is guaranteed.
\end{theorem}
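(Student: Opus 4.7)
The plan is to prove the theorem by induction on $m$, where the only nontrivial ingredient at each step is the non-emptiness of $\sShieldFeas^{\sgen,T\mid\tau_1\ldots\tau_i}$ under Assump.~\ref{assump:dynamic}, which is delivered by Lem.~\ref{lem:buff-shield-exists}. The base case $m=1$ is immediate: $\sshield_1$ is constructed in the algorithm as a \FinShield shield for horizon $T$ and property $\spec$, so by Def.~\ref{def:finshield} it lies in $\sShieldFeas^{\sgen,T}$, and every feasible length-$T$ trace it produces satisfies $\spec(\tau_1)\leq\kappa$.

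For the inductive step, suppose the claim holds for some $i\geq 1$ and consider the observed prefix $\tau_1\ldots\tau_i$. Since this prefix fulfills Assump.~\ref{assump:dynamic}, the definition of $\sShieldFeasDyn^{\sgen,T}(\cdot)$ yields
\[
  \sShield'(\tau_1\ldots\tau_i) = \sShieldFeasDyn^{\sgen,T}(\tau_1\ldots\tau_i) = \sShieldFeas^{\sgen,T\mid\tau_1\ldots\tau_i},
\]
and this set is non-empty by Lem.~\ref{lem:buff-shield-exists}. Hence Eq.~\eqref{eq:dyn-shield optimality} is well-posed, and its minimizer $\sshield_{i+1}$ is a well-defined element of $\sShieldFeas^{\sgen,T\mid\tau_1\ldots\tau_i}$. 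Unpacking the definition of that set, for every $\tau'\in(\aX\times\aY)^T$ with $\tau_1\ldots\tau_i\cdot\tau'\in\Trfeas^{(i+1)T}_{\sgen,\sshield}$ one has $\spec(\tau_1\ldots\tau_i\cdot\tau')\leq\kappa$. Specializing $\tau'=\tau_{i+1}$, which is feasible by assumption, closes the induction and gives $\spec(\tau_1\ldots\tau_{i+1})\leq\kappa$. Observe that the inductive hypothesis $\spec(\tau_1\ldots\tau_i)\leq\kappa$ is actually not used in deriving the bound on the extended trace; rather, induction is unavoidable because one must first know that each intermediate $\sshield_j$ was indeed well-defined, which is Lem.~\ref{lem:buff-shield-exists} applied at each prefix.

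The main obstacle is therefore not in the structural argument above but in the subordinate Lem.~\ref{lem:buff-shield-exists}, which needs to be proven separately: one must show that whenever Assump.~\ref{assump:dynamic} holds on the observed prefix, some shield can restore $\spec\leq\kappa$ by the end of the next period. The intuition is that the inequality $1/\den^a(\tau\tau')+1/\den^b(\tau\tau')\leq\kappa+\spec(\tau)$ bounds how much a single additional observation in each group can move the welfares $\welfare{a},\welfare{b}$, since additivity of $\numer^g,\den^g$ ensures each new accept shifts $\welfare{g}$ by at most $1/\den^g(\tau\tau')$. This leaves enough headroom for a corrective policy---e.g.\ always accepting individuals from the currently-disadvantaged group and always rejecting from the other---to drive $|\welfare{a}-\welfare{b}|$ back under $\kappa$ by the end of the period. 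This construction is the only place where the DoR form of $\spec$ and the additivity of its counter statistics are essentially used; once Lem.~\ref{lem:buff-shield-exists} is in hand, the proof of Thm.~\ref{thm:correctness-dyn-shields} is the purely structural induction described above.
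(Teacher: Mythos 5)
Your proof is correct and follows essentially the same route as the paper's: both reduce the theorem to the non-emptiness of $\sShieldFeas^{\sgen,T\mid\tau_1\ldots\tau_i}$ (Lem.~\ref{lem:buff-shield-exists}) under Assump.~\ref{assump:dynamic}, and then observe that membership of $\sshield_{i+1}$ in that set forces $\spec(\tau_1\ldots\tau_{i+1})\leq\kappa$ for every feasible continuation. Your explicit per-period iteration simply spells out what the paper compresses into ``by construction,'' and your remark that the inductive hypothesis is never actually invoked is accurate — the argument is really an independent application of Lem.~\ref{lem:buff-shield-exists} at each prefix rather than a genuine induction.
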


\section{Experiments}
\label{sec:experiments}



%


\paragraph{Experimental setup.}
We performed our experiments on the datasets 
Adult~\cite{misc_adult_2}, COMPAS~\cite{compas}, 
German Credit~\cite{dua2017uci}, 
and Bank Marketing~\cite{misc_bank_marketing_222}.
The protected attributes include race, gender, and age.
We synthesized shields to ensure DP and EqOpp with thresholds $\kappa\in \set{0.05, 0.1,0.15,0.2}$.
We give further details on our experimental setup 
in App.~\ref{sec:experimental_setup_appendix}.

\subsection{\FinShield Shields}

The ML models were trained with DiffDP~\cite{fairmixup}, 
HSIC~\cite{hsic}, 
LAFTR~\cite{madras2018learning},
and PRemover~\cite{kamishima2012fairness}.
As a baseline, we also trained a classifier using empirical risk minimization (ERM). For all models and datasets, \FinShield shields were synthesized with $T=100$ for DP and $T=75$ for EqOpp. Shield synthesis took about 1 second and 30 MB for DP, and 1.5 seconds and 1.3 GB for EqOpp. We present the detailed resource usage in App.~\ref{sec:shield-computation-times}. We compared model performances---with and without shielding---across $30$ simulated runs. The analysis follows.



\paragraph{Fairness.}
Unshielded ML models violated bounded-horizon fairness in $44\%$ of the cases for DP and in $65\%$ for EqOpp, while shielded models were always fair at the horizons. Detailed results are in Fig.~\ref{fig:fair-distr} and Tab.~\ref{tab:fair_distr} in App.~\ref{sec:experiments_appendix}.
This empirically validates the effectiveness of \FinShield shields. 
%

\paragraph{Utility loss.}
Classification utility is measured using classification accuracy. Note that interventions by the fairness shield can reduce this utility. 
We define \emph{utility loss} as the difference in utility between unshielded and shielded runs. Tab.~\ref{fig:utility-comp} shows the average utility loss across all simulations for a threshold of $0.1$. 
We can observe that the median utility loss is smaller when the classifier is trained to be fair, as fewer interventions are needed. In general, utility loss increases as the bias threshold $\kappa$ decreases, with more pronounced differences between classifiers for smaller $\kappa$.

\begin{table}[t]
    \centering
    \small
    \begin{tabular}{l l c c c c}
    \toprule
          & & Recomp. & Assump. & Fairness \\
          & &         & satisfied & satisfied\\
          \midrule
          \multirow[c]{3}{*}{DP} & \StaticDP  & no & $0.0\%$ & $95.71\%$   \\
          & \StaticBAR  & no & $43.8\%$ & $83.1\%$  \\
          & \Dyn & yes &  $100\%$ & $100\%$ \\
          \midrule
          \multirow[c]{3}{*}{EqOpp} & \StaticDP  & no & $0.0\%$ & $100\%$   \\
          & \StaticBAR & no & $4.1\%$ & $56.4\%$  \\
          & \Dyn & yes &  $49.8\%$ & $100 \%$ \\
          \bottomrule
    \end{tabular}
    \caption{Comparison of different types of fairness shields.
    }
    \label{tab:extensions-comparison-bis}
\end{table}

\begin{table*}[t]
   \centering
   \includegraphics[width=\linewidth]{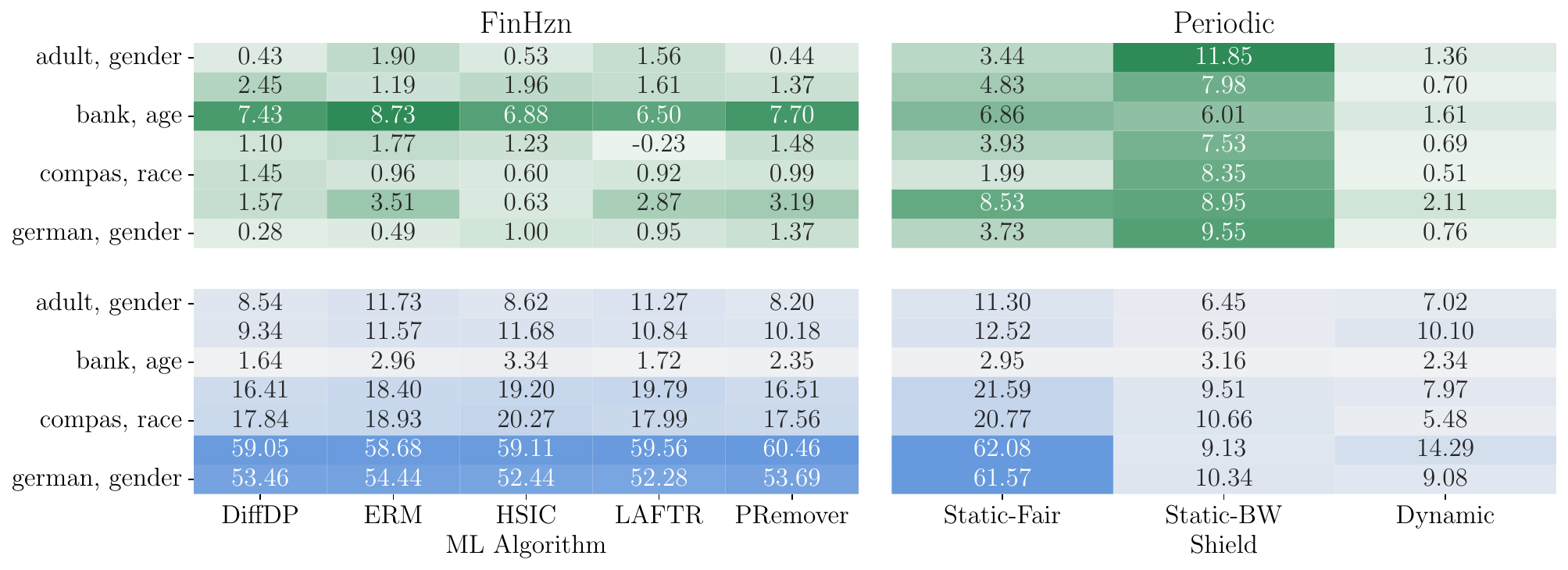}  %
    \caption{Utility loss (in \%) incurred by \FinShield shields for different ML models (left) and by periodic shields on the ERM model (right) for the fairness properties DP (top, green) and EqOpp (bottom, blue). Lighter colors indicate smaller utility loss.
    }
    \label{fig:utility-comp}
\end{table*}

\begin{figure*}[t]
   \centering
   \includegraphics[width=\linewidth]{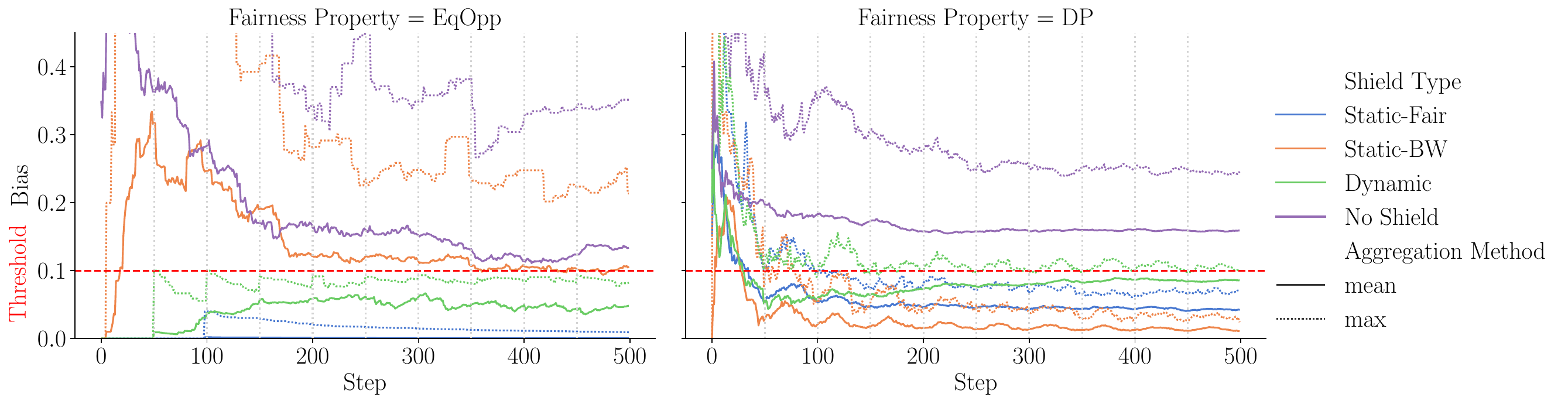}  %
    \caption{
    Variations of bias over time for the ERM classifier on the Adult dataset with and without periodic shielding.
    }
    \label{fig:single-run}
\end{figure*}



\subsection{Periodic Shielding}
ML models were trained using the ERM algorithm across all datasets. We synthesized \StaticDP, \StaticBAR, and \Dyn shields with $T=50$ for DP and EqOpp, and simulated them for $10$ periods. Shield synthesis took about 1 second and 30 MB for DP, and 1.5 seconds and 1.3 GB for EqOpp. Detailed resource usage is in App.~\ref{sec:shield-computation-times}. We compared the models' performances---with and without shielding---across $20$ simulated runs. The analysis follows.


\paragraph{Fairness.}
In Tab.~\ref{tab:extensions-comparison-bis}, we present the rates of assumption and fairness satisfaction across all datasets and runs. 
The assumption for \StaticDP (see Thm.\ref{thm:static shield with bounded DP}) never met, 
and the assumption for \StaticBAR (see Thm.~\ref{thm:correctness-staticBW}) is also often violated. 
Nevertheless, both \StaticDP and \StaticBAR still perform well as heuristics, with many runs satisfying the fairness constraint. 
\Dyn shields outperform both.



\paragraph{Utility loss.}
In Tab.~\ref{fig:utility-comp}, we report the average utility loss across all simulations for each shield for the bias threshold of $0.1$. 
In general, if the assumptions are satisfied, $\Dyn$ shields incur the least loss and $\StaticBAR$ shields incur the most, which is due to their stricter BW objectives. However, an assumption violation forces both $\Dyn$ and $\StaticBAR$ shields to go inactive incurring no additional utility loss. Therefore, the low utility loss of $\StaticBAR$ shields in EqOpp can be explained by the frequent assumption violations.

\section{Related Work}
\label{sec:related_work}

Existing work on fairness in AI focuses on how to \emph{specify}, \emph{design}, and \emph{verify} fair decision-making systems. 
Specification involves quantifying fairness across groups~\cite{feldman2015certifying,hardt2016equality} and individuals~\cite{dwork12}. 
Design approaches ensure that decision-makers meet fairness objectives \cite{hardt2016equality,gordaliza2019obtaining,zafar2019fairness,agarwal2018reductions,wen2021algorithms}.
Verification includes static~\cite{albarghouthi2017fairsquare,bastani2019probabilistic,sun2021probabilistic,ghosh2020justicia,meyer2021certifying,li2023certifying} and runtime~\cite{albarghouthi2019fairness,henzinger2023monitoring,henzinger2023dynamic} methods to assess fairness. 
Our shields are \emph{verified} by \emph{design}, and act as trusted third-party intervention mechanisms, ensuring fairness without requiring knowledge of the AI-based classifier.

Traditionally, fairness is defined using the classifier's output distribution.
However, this can lead to biases over short horizons~\cite{pmlr-v235-alamdari24a}.
To address this, we adopt the recently proposed bounded-horizon fairness properties~\cite{pmlr-v235-alamdari24a}, ensuring decisions remain empirically fair over a bounded horizon.
To the best of our knowledge, our work is the first to provide algorithmic support for guaranteeing bounded-horizon fairness properties.

Sequential decision-making problems have been extensively studied under the umbrella of optimal stopping problems~\cite{shiryaev2007optimal,bandini2018backward,ankirchner2019verification,bayraktar2020optimal,palmer2017optimal,kallblad2022dynamic}.
These works focus on designing policies that approximate those that have perfect foresight about the future. However, statistical properties like fairness are not addressed by existing algorithms in this literature.

\cite{fscd24} proposed sequential decision making algorithms for the general class of finite-horizon statistical properties.
They demonstrated that combining statistically indistinguishable traces in dynamic programming reduces computational costs without altering the output. 
 We apply this idea in our \FinShield shields, where traces with identical counter values remain indistinguishable.

\section{Discussion and Future Work}
\label{sec:discussion}
\paragraph{Static vs. dynamic shielding in the periodic setting.}
   Static shields are computationally cheaper than \Dyn shields and have no runtime overhead, making them ideal for fast decision-making applications like online ad-delivery~\cite{ali2019discrimination}.
   However, they can't adjust decisions based on the actual history, leading to overly restrictive and frequent interventions---particularly in the long run.
   In contrast, \Dyn shields adapt to historical data, resulting in fewer interventions over time, making them suitable for applications like banking where decision-making can afford longer computation times~\cite{liu2018delayed}.

\paragraph{On the feedback effect in sequential decision-making.}
Decisions that seem fair individually can introduce biases over time as the input distribution $\sgen$ changes based on past actions~\cite{d2020fairness,sun2023algorithmic}. 
Although we assumed a constant $\sgen$ in this paper, our recursive synthesis algorithm from Sec.~\ref{sec:synthesis} could be adapted to handle trace-dependent $\sgen$ by simply modifying Eq.~\ref{eq:v-recursion}. A detailed study of this adaptation is left for future work.


\paragraph{Fairness shields with humans in the loop.}
In applications where human experts make decisions with AI assistance, shields may not have final decision authority but can act as a runtime "fairness filter" to modify and de-bias the AI's outputs before presenting them to the human expert. 


\paragraph{Other future directions.}
Valuable future work includes extending static and dynamic shields to broader classes of fairness properties beyond DoR. Additionally, a comparative study of fine-grained intervention costs would help identify cost models that minimize utility loss. 
Lastly, we plan to extend fairness shields to offer optimality guarantees when input distributions involve uncertainties, e.g., by replacing exact probabilities with intervals. 






\bibliography{references}


\ifthenelse{\boolean{includeappendices}}{
    \appendix
    
    \clearpage
    
    \section{Additional Insights}
    \label{sec:reflections}
    In this appendix, we discuss additional insights into the assumptions that we have throughout the paper. 



\subsection{Existence of \FinShield}
    The set of feasible solutions of the optimization problem in Eq.~\ref{eq:finite horizon shield} is nonempty for DoR properties, because the fairness-shield that always accepts or always rejects each candidate from each group is a solution that trivially fulfills $\spec(\tau)\leq \kappa$.
    Even nontrivial optimal fairness-shields may exhibit such degenerate behaviors at runtime, when the order of appearances of individuals from the two groups is excessively skewed. 
    Consider the following example for demographic parity (DP).
    Let $\kappa < \frac{1}{T}$.
    Suppose at time $T-1$, all the individuals seen so far were from group $a$ (i.e., $n_a=T-1$ and $n_b=0$).
    If some of the individuals were accepted and the rest rejected, then $0<n_{a1}<n_a$, implying $\kappa<\frac{n_{a1}}{n_a}< 1-\kappa$.
    Now if the $T$-th individual $x$ is from group $b$, $n_b$ becomes $1$, and no matter which action the shield picks, DP will be violated:
    If $x$ is accepted, then $n_{b_1} = \frac{n_{b1}}{n_b}=1$, and if $x$ is rejected, then $n_{b1}=\frac{n_{b1}}{n_b}=0$.
    In both cases, $\DP\left(\left| 
     \frac{n_{a1}(\tau)}{n_a(\tau)} - \frac{n_{b1}(\tau)}{n_b(\tau)}  \right|\right)>\kappa$.
    Therefore, the shield must have made sure that each individual until time $T-1$, all of whom were from group $a$, were either accepted or rejected.
    Luckily, the chances of such skewness of appearance orders is rare in most applications, so that \FinShield as in Def.~\ref{def:finshield} exhibit effective, non-trivial behaviors in most cases as seen from our experiments.

\subsection{Counterexample families for \StaticDP shields being not composable}
\label{sec:counterexamples-static-fair}
We have already shown in Example~\ref{ex:counter-example-periodic-naive-bis} that traces can have zero bias in terms of $\DP$, and when composed have arbitrarily close to 1. 

While the family of counterexamples presented in Example~\ref{ex:counter-example-periodic-naive-bis} is quite degenerate in the sense that acceptance rates are always either 0 or 1, 
we present here another family of counterexamples that is less degenerate.
We write these examples for demographic parity, but the same ideas can be applied to build counterexamples for any DoR property.

Let $T > 0$ and $ 0 < K < T/2$. The family of counterexamples will be parametrized by $(T, K)$. 

    For a pair $(T,K)$ consider 
    traces $\tau_1$, $\tau_2$ such that 
    $(n_{a1}, n_a, n_{b1}, n_b)(\tau_1) = (1, K, 1, T-K)$, 
    and 
    $(n_{a1}, n_a, n_{b1}, n_b)(\tau_2) = 
    (T-K-1, T-K, K-1, K)$.

    In the trace $\tau_1$, exactly one element of each group was accepted, while in the trace $\tau_2$, all but one element of each group were accepted. 
    The values of demographic parity are:
    \begin{equation} \label{eq:dp1}
        \DP(\tau_1)_{T,K} = \left|
        \frac{1}{K} - \frac{1}{T-K}\right| = \frac{T-2K}{(T-K)K}
    \end{equation}

    \begin{equation}\label{eq:dp2}
        \DP(\tau_2)_{T,K} = \left|
        \frac{T-K-1}{T-K} - \frac{K-1}{K}
        \right| = 
        \frac{T-2K}{(T-K)K}
    \end{equation}

    \begin{equation}\label{eq:dp1+2}
        \DP(\tau_1\tau_2)_{T,K} = \left|
        \frac{T-K}{T} - \frac{K}{T}
        \right| = 
        \frac{T-2K}{T}.
    \end{equation}

    These pair of traces are not a counterexample for every pair $(T,K)$.
    However, we can observe that, once fixed $K$, the limit when $T\to \infty$ of Eq.~\ref{eq:dp1} and Eq.~\ref{eq:dp2} is $1/K$, but the limit when $T\to\infty$ of Eq.~\ref{eq:dp1+2} is 1.
    Therefore, for every $\varepsilon$, we can find $K$ large enough such that $1/K < \varepsilon/2$, 
    and then find $T$ large enough such that the corresponding $\DP$ values are close enough to the limit.

    We now build a different family of counterexamples that show that the condition for correctness of \StaticDP shields given in Thm.~\ref{thm:static shield with bounded DP} is as tight as can be.

\begin{theorem}
    For all $\kappa > 0$, there exists 
    $\kappa_1$ and $\kappa_2$
    $\kappa_1 \leq  \kappa \leq \kappa_2$,
    such that for $i\in\{1,2\}$, there exists $t_i$
    and traces $\tau_i, \tau_i'$ that are 
    $\lfloor\frac{t_i-1}{2}\rfloor$-balanced
    such that $\DP(\tau_i)\leq \kappa_i$, $\DP(\tau_i')\leq \kappa_i $,
    and $\DP(\tau_i\tau_i') > \kappa_i$.
\end{theorem}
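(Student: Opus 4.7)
The plan is to instantiate the parametric family of traces $(\tau_1,\tau_2)_{T,K}$ from Eqs.~\eqref{eq:dp1}--\eqref{eq:dp1+2} at two carefully chosen parameter regimes, producing $\lfloor(t_i-1)/2\rfloor$-balanced counterexamples with thresholds $\kappa_1 \le \kappa \le \kappa_2$.

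For the lower threshold $\kappa_1$, I would set $K = \lfloor(T-1)/2\rfloor$, which automatically makes the group counts $n_a = K$ and $n_b = T-K$ differ by at most two and thus satisfy the balance requirement. Substituting into Eqs.~\eqref{eq:dp1}--\eqref{eq:dp1+2} and using $T - 2K \in \{1,2\}$ yields $\DP(\tau_1) = \DP(\tau_1') = \Theta(1/T^2)$ and $\DP(\tau_1\tau_1') = \Theta(1/T)$. A routine check confirms $\DP(\tau_1) < \DP(\tau_1\tau_1')$ as soon as $K(T-K) > T$, which holds for every $T \ge 5$. Thus the half-open interval $[\DP(\tau_1), \DP(\tau_1\tau_1'))$ is non-empty, and by picking $T \ge \max(5, \lceil 2/\kappa \rceil)$ the entire interval lies at or below $\kappa$; any $\kappa_1$ inside it then serves as the required threshold. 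This is the only step with non-trivial book-keeping: verifying non-emptiness reduces to the elementary inequality $K(T-K) > T$, which is immediate for $T \ge 5$.

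For the upper threshold $\kappa_2$, and assuming $\kappa < 1$ (since biases are bounded by $1$), I would use the minimal horizon $T = 2$, for which $\lfloor(T-1)/2\rfloor = 0$ and the balance constraint becomes vacuous. Let $\tau_2$ consist of two group-$b$ individuals, both rejected, and let $\tau_2'$ consist of two group-$a$ individuals, both accepted. By the undefined-welfare convention in Tab.~\ref{tab:fairness properties}, each trace has $\DP = 0$; the concatenation has $n_a = n_b = 2$, $n_{a1} = 2$, $n_{b1} = 0$, and hence $\DP(\tau_2\tau_2') = 1$. Taking $\kappa_2 = \kappa$ then satisfies $\DP(\tau_2),\DP(\tau_2') \le \kappa_2$ and $\DP(\tau_2\tau_2') > \kappa_2$ simultaneously. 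No technical obstacle arises in this case, so the two regimes together close the proof.
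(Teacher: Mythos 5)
Your proof is formally correct against the theorem as literally stated, but the two halves take a different route than the paper's, and the upper-threshold half rests on a degenerate corner. For $\kappa_1$ you instantiate the $(T,K)$ family from Eqs.~\eqref{eq:dp1}--\eqref{eq:dp1+2} at $K=\lfloor(T-1)/2\rfloor$ and push $T$ large; this is a sound, self-contained alternative to the paper's construction, which instead uses the explicit traces $\tau_1=(T+1,\,T/2+1,\,T,\,T/2)$ and $\tau_2=(T,\,T/2,\,T+1,\,T/2)$ of length $2T+1$ with DP value $\frac{1}{2(T+1)}$ and then brackets $\kappa$ by choosing $T$ to be $\lfloor(1-2\kappa)/(2\kappa)\rfloor$ and $\lceil(1-2\kappa)/(2\kappa)\rceil$. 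The paper thus handles both $\kappa_1$ and $\kappa_2$ from one parametric family, whereas you switch constructions.

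For $\kappa_2$, however, your choice of $t_2=2$ drives $\lfloor(t_2-1)/2\rfloor=0$, so the balance requirement becomes vacuous, and the traces you pick have $n_a=0$ (resp.\ $n_b=0$). Then $\DP(\tau_2)=0$ only because of the null-denominator convention in Tab.~\ref{tab:fairness properties}, not because two meaningful acceptance rates are close. That is the exact opposite of what this theorem is meant to illustrate: the whole point, stated just before the theorem, is that the equal-denominator condition of Thm.~\ref{thm:static shield with bounded DP} is tight even when group counts differ by at most one per period. The paper's traces have both groups present with denominators $T$ and $T+1$, so the small bias is genuine; yours are maximally skewed. Worse, the $T=2$ shortcut trivializes the theorem altogether---one could set $\kappa_1=\kappa_2=\kappa$ and reuse it on both sides---which shows that the formal statement is looser than the intended tightness claim, and your proof exploits that slack rather than closing it. To match the paper's content you should produce, for $\kappa_2$ as well, a family of balanced traces (group counts within one of each other, both nonzero) whose DP values tend to $\kappa$ from above, for instance by taking the smaller admissible $T$ in the paper's parametric family.
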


\begin{proof}
    We prove this theorem by constructing families of counterexamples. 
    For this proof, we use the (slightly abusive) notation that a trace is composed by its four counters, 
    so $\tau = (n_{a}(\tau), n_{a1}(\tau), n_b(\tau), n_{b1}(\tau))$.

    Let $t = 2T+1$ with $T$ even. Consider the traces $\tau_1 = (T+1, T/2 + 1, T, T/2)$ and $\tau_2 = (T, T/2, T+1, T/2)$. Both traces are $T$-balanced. Let's compute demographic parity:
    \begin{equation*}
        \DP(\tau_1) = \frac{T/2 +1}{T+1} - \frac{T/2}{T} = \frac{1}{2(T+1)}
    \end{equation*}
    \begin{equation*}
        \DP(\tau_2) = \frac{T/2}{T} - \frac{T/2}{T+1} = \frac{1}{2(T+1)}
    \end{equation*}
    \begin{equation*}
        \DP(\tau_1\tau_2) = \frac{T+1}{2T+1} - \frac{T}{2T+1} = \frac{1}{2T+1}
    \end{equation*}

    It is clear that $\DP(\tau_1) = \DP(\tau_2) < \DP(\tau_1\tau_2)$.
    Ideally, we would choose $T$ such that $\frac{1}{2(T+1)} = \kappa$, 
    which can be rewritten to $T = \frac{1-2\kappa}{2\kappa}$. 
    However, this may not be an integer. So, given $\kappa$, we take 
    \[
    T_1 = \left\lfloor \frac{1-2\kappa}{2\kappa} \right\rfloor,
    \quad \mbox{and} \quad
    T_2 = \left\lceil \frac{1-2\kappa}{2\kappa} \right\rceil,
    \]
    and define $\kappa_i = \frac{1}{2(T_i+1)}$.
    
    This finishes the construction for an odd $t$. 
    For an even $t$, we show a similar construction. 
    Let $t = 2T$. Consider the traces 
    $\tau_1 = (T+1, 2 ,T - 1, 1)$, $\tau_2 = (T-1, 1, T+1, 1)$. 
    Both traces are $(T-1)$-balanced.
    Let's compute demographic parity:
    \begin{equation*}
        \DP(\tau_1) = \frac{2}{T+1} - \frac{1}{T-1} = \frac{T-3}{T^2-1}
    \end{equation*}
    \begin{equation*}
        \DP(\tau_2) = \frac{1}{T-1} - \frac{1}{T+1} = \frac{2}{T^2-1}
    \end{equation*}
    \begin{equation*}
        \DP(\tau_1\tau_2) = \frac{T+1}{2T+1} - \frac{T}{2T+1} = \frac{1}{2T+1}
    \end{equation*}
    This finishes the proof.   
\end{proof}
The construction proving the theorem is for time horizons $t$ that are $t \equiv 1 (\mod 4)$. Similar constructions can be found for other congruence classes.

\subsection{Existence of \StaticBAR Shields}
\label{sec:condition-on-BAR-shields-existing}
The feasiblity of the condition $N \geq \left\lceil \frac{1}{u-l}\right\rceil$
(Thm.~\ref{thm:acc-rates-existence})
depends on the values of $l$ and $u$ to enforce, as well as the incoming probability distribution. 
In its most simplified form, if we just care about the group membership of any incoming candidate, the distribution of incoming candidates follows a Bernoulli distribution $B(p)$, where $p$ is the probability to receive a candidate of group A. 
After a time horizon $T$, the number of incoming candidates of group A follows a binomial distribution $Bin(T, p)$, and the probability to see at least $N$ candidates of each group is the probability of the binomial being between $N$ and $T-N$, which is 
\[
\sum_{k=N}^{T-N}\binom{T}{k}p^k(1-p)^{T-k}.
\]
In practice, this corresponds to the probability that our shield will encounter a trace where demographic parity with the given bound on acceptance rates can be enforced. 
It is up to the user to evaluate whether this guarantee is enough for a given application.

\subsection{Existence of \Dyn Shields}

The synthesis problem for dynamic shields may not be feasible in general.
This is because there may be traces of length $jT$ that satisfy a certain DP constraint, 
but no shield can guarantee the next trace will satisfy the same constraint. 

\begin{example}\label{ex:buffered}
    Consider $\kappa = 0.1$, $T=100$, and a trace $\tau$ such that $n_a(\tau) = 2$, $n_{a1}(\tau) = 1$, $n_b(\tau) = 98$, and $n_{b1}(\tau) = 49$. The trace $\tau$ satisfies $\DP(\tau) = 0$.
    Now assume we build a shield for the next fragment, and in generating the next trace $\tau'$, only individuals from group $b$ have appeared for the first 99 samples. Let $\tau'_{[1:99]}$ denote this trace, and let $\mathtt{Acc}_b$ denote $\AccRate{b}{\tau\tau'_{[1:99]}}$.
    Then $\DP(\tau\tau'_{[1:99]}) = |1/2 - \mathtt{Acc}_b|$.
    If the last individual of $\tau'$ happens to be from group $a$, the acceptance rate of group $a$ moves from $1/2$ to either $1/3$ (if it gets rejected) or $2/3$ (if it gets accepted). 
    There is no possible value of $\mathtt{Acc}_b$
    that simultaneously guarantees $|1/3 - \mathtt{Acc}_b| \leq \kappa$ and $|2/3 - \mathtt{Acc}_b| \leq \kappa$.
\end{example}


    \section{Detailed Proofs}
    \label{sec:detailed-proofs}
    \subsection{Recursive Shield Synthesis}

\begin{lemma}\label{lem:v-recursion}
    Let $\sgen\in \distrset(\aX)$ be a given joint distribution of sampling individuals and the output of the agent $\sagent$, let $\kappa>0$ be a given threshold for a fairness property $\spec$, and let $T>0$ be a time horizon.
    Let $\sshield^*$ be the shield that minimizes the expected cost after time $T$, i.e.,
\begin{align*}
    \sshield^* \coloneqq \arg\min_{\sshield\in\sShieldFeas^{\sgen,T} } \expe[\cost;\sgen,\sshield,T].
\end{align*}
For a trace $\tau\in (\aX\times\aY)^{\leq T}$, 
let $v(\tau)$ be  the minimum expected cost after a trace $\tau$:

\begin{align*}
    v(\tau) \coloneqq \min\limits_{\pi\in \sShieldFeas^{\sgen,(T-\len{\tau}) \mid\tau}} \expe[\cost\mid \tau;\sgen,\sshield,T-|\tau|].
\end{align*}

Then for $\tau$ with length $|\tau| = T$
\begin{equation}\label{eq:v-basecase-bis}
 v(\tau) = 
        \begin{cases}
            0   & \spec(\tau)\leq\kappa,\\
            \infty  &   \text{otherwise}.
        \end{cases}
\end{equation}
And for $\tau$ with $|\tau| < T$:
\begin{equation}\label{eq:v-recursion-bis}
    \! v(\tau) = \sum_{\cX = (g,r, c)\in\aX} \sgen(\cX)\cdot \min\left\lbrace\begin{matrix}
        v(\tau,(x,y=r)),\\
        v(\tau,(x,y\neq r)) + c
    \end{matrix}\right\rbrace.
\end{equation}

\end{lemma}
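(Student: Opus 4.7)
The plan is to prove the two displayed equations separately: the base case by direct unfolding of the definitions, and the recursive case by a standard principle-of-optimality (Bellman) argument.

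For the base case $|\tau| = T$, the set $\sShieldFeas^{\sgen,0\mid\tau}$ consists of shields over horizon zero, which make no further decisions. Hence $\expe[\cost\mid\tau;\sgen,\sshield,0] = 0$ for every shield. The feasibility constraint reduces to the condition on the empty continuation, i.e.\ to $\spec(\tau)\leq\kappa$. Therefore, if $\spec(\tau)\leq\kappa$, every shield is feasible and the minimum expected cost is $0$; if $\spec(\tau)>\kappa$, the feasible set is empty and the convention $\min\emptyset = \infty$ yields $v(\tau) = \infty$, matching \eqref{eq:v-basecase-bis}.

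For the recursive step, I would condition on the first fresh input $x = (g,r,c)$, which is drawn from $\sgen$ independently of the history. Using the factorization of the trace probability $\prob(\tau\cdot(x,y)\cdot\tau'';\sgen,\sshield) = \sgen(x)\cdot\indicator{y=\sshield(\tau,x)}\cdot \prob(\tau''\mid \tau\cdot(x,y);\sgen,\sshield)$ and the decomposition of the cost as the incremental one-step cost $c\cdot\indicator{y\neq r}$ plus the future cost, the conditional expectation splits as
\begin{align*}
    \expe[\cost\mid\tau;\sgen,\sshield,T-|\tau|] \;=\; \sum_{x\in\aX}\sgen(x)\Bigl[c\cdot\indicator{\sshield(\tau,x)\neq r} + \expe[\cost\mid\tau\cdot(x,\sshield(\tau,x));\sgen,\sshield,T-|\tau|-1]\Bigr].
\end{align*}
Because shields are history-dependent functions, the value $\sshield(\tau,x)$ at the current step and the restriction of $\sshield$ to histories extending $\tau\cdot(x,y)$ can be chosen independently for each $x$. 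Hence the infimum over $\sshield\in\sShieldFeas^{\sgen,T-|\tau|\mid\tau}$ commutes with the sum over $x$ and with the per-input minimization over $y\in\{0,1\}$, which evaluates to the $\min\{\cdot,\cdot\}$ in \eqref{eq:v-recursion-bis}: the branch $y=r$ contributes $v(\tau,(x,r))$ with no extra cost, while $y\neq r$ contributes $v(\tau,(x,\neg r))+c$.

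The main obstacle is the rigorous justification that the minimization commutes with the outer expectation and with the per-input choice of $y$. This requires two gluing facts about the class of shields: (i) every $\sshield\in\sShieldFeas^{\sgen,T-|\tau|\mid\tau}$ can be decomposed into a current-step choice $\sshield(\tau,\cdot)$ together with sub-shields acting on the tails $\tau\cdot(x,y)\cdot(\cdot)$; and (ii) any coherent family of such sub-shields, one per pair $(x,y)$, can be glued into a bona fide shield that is feasible whenever each sub-shield is feasible on its branch. A small additional check concerns infeasibility bookkeeping: if both branches at some $x$ with $\sgen(x)>0$ are infeasible, then $v(\tau,(x,r))=v(\tau,(x,\neg r))=\infty$ and the right-hand side of \eqref{eq:v-recursion-bis} correctly evaluates to $\infty$, matching the emptiness of $\sShieldFeas^{\sgen,T-|\tau|\mid\tau}$ on the left.
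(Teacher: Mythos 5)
Your proposal is correct and follows essentially the same Bellman-style conditioning on the next sampled input as the paper's proof (App.~B, Lemma~\ref{lem:v-recursion}): decompose the conditional expected cost into the immediate one-step cost plus the continuation cost, then minimize per branch. You are, if anything, slightly more careful than the paper's write-up: you spell out the base case, make explicit the shield-decomposition/gluing facts needed to interchange the minimization over shields with the outer sum over $x$, and track the $\min\emptyset = \infty$ bookkeeping on infeasible branches, whereas the paper's proof leaves these steps implicit.
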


\begin{proof}
    Consdier the term to be minimized:
    \begin{align}\label{eq:local5}
        \expe[\cost\mid \tau;\sgen,\sshield,T-|\tau|] = 
        \sum_{\tau'\in (\aX\times\aY)^{T-|\tau|}}
        \cost(\tau')\cdot
        \prob(\tau'\mid\tau;\sgen,\sshield).
    \end{align}
    The sum over traces $\tau'\in (\aX\times\aY)^{T-|\tau|}$
    can be partitioned into a sum over inputs $x\in\aX$
    and traces $\tau''\in (\aX\times\aY)^{T-|\tau|-1}$,
    by taking $\tau' = x\sshield(\tau,x)\tau''$.
    The cost term is then
    \[
    \cost(\tau') = 
    \cost(x\sshield(\tau,x)\tau'') = \cost(\tau'') + \cost(x\sshield(\tau,x)).
    \]
    If $x = (g,r,c)$, then
    \[
    \cost(x\sshield(\tau,x)) = \begin{cases}
        0 \quad \mbox{ if } r = \sshield(\tau,x) \\ 
        c \quad \mbox{ otherwise. }
    \end{cases}
    \]
    The probability term is then:
    \[
    \prob(\tau'\mid\tau;\sgen,\sshield) = 
    \prob(x\sshield(\tau,x) \mid \tau;\sgen,\sshield)\cdot
    \prob(\tau''\mid \tau x\sshield(\tau,x);\sgen,\sshield)
    \]
    The value in Eq.~\ref{eq:local5} can be written as 
    \begin{equation}\label{eq:local6}
    \sum_{\tau'\in (\aX\times\aY)^{T-|\tau|}}
        \cost(\tau')\cdot
        \prob(\tau'\mid\tau;\sgen,\sshield) = 
        A + B,
    \end{equation}
    where 
    \begin{multline}
    \label{eq:local7}
        A = \sum_{x\in\aX
        }\sum_{\tau''\in (\aX\times\aY)^{T-|\tau|-1}}
        \cost(\tau'') \cdot 
        \\
        \cdot \prob(x\sshield(\tau,x) \mid \tau;\sgen,\sshield)\cdot
        \prob(\tau''\mid \tau x\sshield(\tau,x);\sgen,\sshield),
    \end{multline}
    and 
    \begin{multline}
    \label{eq:local8}
    B =\sum_{x\in\aX
        }\sum_{\tau''\in (\aX\times\aY)^{T-|\tau|-1}}
    \cost(x\sshield(\tau,x)) 
    \\ 
    \prob(x\sshield(\tau,x) \mid \tau;\sgen,\sshield)\cdot
    \prob(\tau''\mid \tau x\sshield(\tau,x);\sgen,\sshield).
    \end{multline}
     Note that the term $\prob(x\sshield(\tau,x) \mid \tau;\sgen,\sshield)$ appears several times. 
    This is the probability of getting a trace $x\sshield(\tau,x)$ after having seen a trace $\tau$. 
    This is, by definition $\prob(x\sshield(\tau,x) \mid \tau;\sgen,\sshield) = \sgen(x)$.
    
    Since $\sgen(x)$
    does not depend on $\tau''$, 
    the sum in $A$ can be rearranged as 
    \begin{multline}
         A = \sum_{x\in\aX
        }
        \sgen(x)\cdot
        \sum_{\tau''\in (\aX\times\aY)^{T-|\tau|-1}}
        \cost(\tau'')
        \cdot \\
        \prob(\tau''\mid \tau x\sshield(\tau,x);\sgen,\sshield),
    \end{multline}
    and therefore
    \begin{multline}
         A = \sum_{x\in\aX
        }\sgen(x)
        \cdot
        \expe[\cost\mid \tau x\sshield(\tau,x);\sgen,\sshield,T-|\tau|-1].
    \end{multline}
    The term $B$ can be similarly rearranged, taking into consideration that in this case $\cost(x\sshield(\tau,x))$ is also independent of $\tau''$:
    \begin{multline}
    B =\sum_{x\in\aX
        }
    \cost(x\sshield(\tau,x))\cdot
    \sgen(x) \\
    \sum_{\tau''\in (\aX\times\aY)^{T-|\tau|-1}}
    \prob(\tau''\mid \tau x\sshield(\tau,x);\sgen,\sshield).
    \end{multline}
    The hanging term is the sum of probabilities, so by definition adds up to 1:
    \[
     \sum_{\tau''\in (\aX\times\aY)^{T-|\tau|-1}}
    \prob(\tau''\mid \tau x\sshield(\tau,x);\sgen,\sshield) =1.
    \]
    Therefore 
    \begin{equation}
        B =\sum_{x\in\aX
        }\sgen(x)\cdot
    \cost(x\sshield(\tau,x))
    \end{equation}

    Putting $A$ and $B$ together we get:
    \begin{multline}
        \expe[\cost\mid \tau;\sgen,\sshield,T-|\tau|]  
        = \\ \hspace{-6em}
        \sum_{x\in\aX} \sgen(x)\cdot\Big(
        \cost(x\sshield(\tau,x)) + \\
        \expe[\cost\mid \tau x\sshield(\tau,x);\sgen,\sshield,T-|\tau|-1]
        \Big)
    \end{multline}
    This partitions the value of $\expe[\cost\mid \tau;\sgen,\sshield,T-|\tau|]$ into a sum of cost of current decision ($\cost(x\sshield(\tau,x))$)
    and expected cost in the rest of the trace. 
    For every $x$, the optimal value of the shield $\pi(\tau,x)$ is the one that minimizes 
    \begin{equation*}
        \cost(x\sshield(\tau,x)) + \\
        \expe[\cost\mid \tau x\sshield(\tau,x);\sgen,\sshield,T-|\tau|-1].
    \end{equation*}
    This is precisely, the recursive property that we want to prove.
\end{proof}

\noindent\textbf{Theorem \ref{thm:bounded-horizon shield synthesis-bis}}. 
\emph{The bounded-horizon shield-synthesis problem 
 can be solved in $\mathcal{O}(T^p\cdot |\aX|)$-time and $\mathcal{O}(T^p\cdot |\aX|)$-space.}
 \begin{proof}
    In Sec.~\ref{sec:synthesis} we describe a dynamic programming approach to synthesize the shield by recursively computing $v(\tau)$ for all possible traces $\tau\in(\aX\times\aY)^{\leq T}$.
    As explained in Sec.~\ref{sec:synthesis:efficient}, 
    these computations do not depend directly on $\tau$, 
    but rather on the statistic $\mu$, 
    that depends on $p$ counters, taking values in the interval $[0,T]$. 
    We need to build a table with the shield values for every pair of counter values and input.
    Therefore, the table occupies a space $\mathcal O(T^p\cdot|\aX|)$. 
    Every element of the table has to be computed only once, and it is done as a sum over all elements of $x$, thus the cost in time is $\mathcal O(T^p\cdot |\aX|)$.
 \end{proof}

\subsection{Static Shielding}

\noindent\textbf{Theorem \ref{thm:static shield with bounded DP}} (Conditional correctness of \StaticDP shields)
\emph{
    Let $\spec$ be a DoR fairness property.
    For a given \StaticDP shield $\sshield$,
    let $\tau = \tau_1\ldots\tau_m\in \Trfeas^{mT}$ be a trace with $\len{\tau_i}=T$ for each $i\leq m$.
    If for every $i\leq m$, $\tau_i$ is $(T/2)$-balanced w.r.t.\ $\den^a$ and $\den^b$,
    then $\spec(\tau)\leq \kappa$.}

\begin{proof}
    For each $i\leq m$, if $\den^a(\tau_i) = \den^b(\tau_i) = T/2$, 
    then $\welfare{g}(\tau_i) = (2/T)\numer^g(\tau_i)$,
    and therefore $\spec(\tau_i) = (2/T)|\numer^a(\tau_i) - \numer^b(\tau_i)|$.
    If $\spec(\tau_i)\leq \kappa$,
    $|\numer^a(\tau_i) - \numer^b(\tau_i)|\leq \kappa T/2$.

    Applying the triangular inequality followed by the previous result for all $i$, we get
    \begin{align*}
    \left|\sum_i (\numer^a(\tau_i)-\numer^b(\tau_i))\right|
    & \leq 
    \sum_i\left|\numer^a(\tau_i) - \numer^b(\tau_i)\right|\\    
    & \leq 
    m\kappa T/2.
    \end{align*}
 \end{proof}

\noindent\textbf{Lemma \ref{prop:acc-rates}.}
\emph{Let $(l,u)$ be given welfare bounds, and $\welfare{g}(\cdot)\equiv \numer^g(\cdot)/\den^g(\cdot)$ for additive $\numer^g,\den^g$.
    For every trace $\tau = \tau_1\dots \tau_m$,
    if for each $i\leq m$, 
    $\welfare{g}(\tau_i) \in [l,u]$, 
    then 
    $\welfare{g}(\tau) \in [l,u]$.}

To prove Lemma~\ref{prop:acc-rates}, we first need to prove the following auxiliary result.

\begin{lemma}\label{lem:min-frac-max-bis}
    Let $a_1, \dots, a_m$, $b_1, \dots, b_m$ be positive real numbers. 
    Then
    \begin{equation}\label{eq:min-frac-max-lemma}
        \min_{i\in\{1\dots m\}} \frac{a_i}{b_i} \leq 
        \frac{\sum_{i=1}^m a_i}{\sum_{i=1}^m b_i} \leq 
        \max_{i\in\{1\dots m\}} \frac{a_i}{b_i}.
    \end{equation} 
\end{lemma}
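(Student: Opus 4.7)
The statement is the classical mediant inequality, so the plan is to prove the upper bound via a one-line weighting argument and then invoke symmetry (or repeat the argument) for the lower bound. No induction on $m$ is needed, though that would be an equally clean alternative.

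Concretely, let $M = \max_{i \in \{1,\dots,m\}} a_i/b_i$. Since each $b_i > 0$, the definition of $M$ gives $a_i \leq M \cdot b_i$ for every $i$. Summing over $i = 1, \dots, m$ yields $\sum_{i=1}^m a_i \leq M \sum_{i=1}^m b_i$. Dividing both sides by the strictly positive quantity $\sum_{i=1}^m b_i$ produces the upper bound in Eq.~\eqref{eq:min-frac-max-lemma}.

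For the lower bound, set $L = \min_{i} a_i/b_i$ and note that $a_i \geq L \cdot b_i$ for every $i$ by the definition of $L$; the analogous summation and division give $L \leq \sum a_i / \sum b_i$. Alternatively, one can simply observe that replacing $a_i$ by $-a_i$ (or, to stay in the positive setting, by $b_i^2/a_i$ or similar) reduces the lower bound to the already-proved upper bound via symmetry of the statement.

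I expect no real obstacle here; the only subtlety is the implicit assumption that $\sum b_i > 0$, which is immediate from the hypothesis that each $b_i$ is positive. The proof is entirely elementary and fits in a few lines.
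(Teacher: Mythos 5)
Your proof is correct, and it takes a genuinely different route from the paper's. The paper proves the result by induction on $m$: it first cites the two-fraction mediant inequality $\min(w/x, y/z) \leq (w+y)/(x+z) \leq \max(w/x, y/z)$ as the base ingredient, then applies it repeatedly with $w = \sum_{i=1}^{m-1} a_i$, $x = \sum_{i=1}^{m-1} b_i$, $y = a_m$, $z = b_m$, invoking the induction hypothesis at each step. Your argument bypasses induction entirely: setting $M = \max_i a_i/b_i$, the inequality $a_i \leq M b_i$ holds termwise, so summing and dividing by $\sum b_i > 0$ gives the upper bound in one stroke, and the lower bound is symmetric. Your approach is shorter and avoids having to establish (or cite) the $m=2$ case as a separate lemma; the paper's inductive version is arguably more modular if one already has the two-fraction inequality at hand. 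One minor caution: your parenthetical suggestion of reducing the lower bound to the upper bound by replacing $a_i$ with $-a_i$ or $b_i^2/a_i$ is not quite a clean symmetry (negation violates positivity, and $b_i^2/a_i$ flips the ratio in a way that needs a moment's justification), but this is moot since your direct argument with $L = \min_i a_i/b_i$ already handles it correctly.
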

\begin{proof}
    This is an extension of the following known inequality: 
    given positive numbers
    $w, x, y, z$, if $w/x < y/z$, then
    $\frac{w}{x} \leq \frac{w+y}{x + z} \leq \frac{y}{z}$.
    We can restate it as:
    \begin{equation} \label{eq:min-frac-max1-bis}
        \min\left(
        \frac{w}{x}, \frac{y}{z} 
        \right)
        \leq \frac{w + y}{x + z} 
        \leq 
        \max\left(
        \frac{w}{x}, \frac{y}{z} 
        \right).
    \end{equation}

    We prove this result by induction on $m$. The base case for $m=1$ is trivial.
    
    For a general $m$, we start applying inequality~\ref{eq:min-frac-max1-bis} with
    $w = \sum_{i=1}^{m-1} a_i$, 
    $x = \sum_{i=1}^{m-1} b_i$,
    $y = a_m$, and $z = b_m$, 
    to obtain:
    \begin{equation*}
        \frac{\sum_{i=1}^m a_i}{\sum_{i=1}^m b_i} \leq
        \max\left(
        \frac{\sum_{i=1}^{m-1} a_i}{\sum_{i=1}^{m-1} b_i}, 
        \frac{a_m}{b_m}
        \right).
    \end{equation*}
    Applying the induction hypothesis we have that 
    \begin{equation}
        \frac{\sum_{i=1}^{m-1} a_i}{\sum_{i=1}^{m-1} b_i} \leq 
        \max_{i\in \{1\dots m-1\}} \frac{a_i}{b_i},
    \end{equation}
    and therefore:
    \begin{equation*}
        \frac{\sum_{i=1}^m a_i}{\sum_{i=1}^m b_i} \leq
        \max\left(
        \max_{i\in \{1\dots m-1\}} \frac{a_i}{b_i}, 
        \frac{a_m}{b_m}
        \right) = \max_{i\in \{1\dots m\}} \frac{a_i}{b_i}.
    \end{equation*}
    This proves the right-side inequality of Eq.~\ref{eq:min-frac-max-lemma}. 
    The left-side is analogous.
\end{proof}

\begin{proof}[Proof (Of Lemma~\ref{prop:acc-rates})]
    Let $n^a_i = \den^a(\tau_i)$, $n^{a1}_i = \numer^a(\tau_i)$, 
    $n^b_i = \den^b(\tau_i)$, and $n^{b1}_i = \den^b(\tau_i)$.
    
    Applying Lemma~\ref{lem:min-frac-max-bis}, we have 
    for all $g\in\aG$
    that 
    \begin{equation}\label{eq:min-frac-max-lemma-bis}
        \min_{i\in\{1\dots n\}} \frac{n^{g1}_i}{n^g_i} \leq 
        \frac{\sum_{i=1}^n n^{g1}_i}{\sum_{i=1}^n n^{g}_i} \leq 
        \max_{i\in\{1\dots n\}} \frac{n^{g1}_i}{n^g_i}.
    \end{equation} 

    And we also know that all welfare values are bounded by $l$ and $u$. That is, for all $i\in\{1\dots n\}$ and all $g\in\aG$

    \begin{equation} \label{eq:beta-alpha-bis}
        l \leq  \frac{n^{g1}_i}{n^{g}_i} \leq u
    \end{equation}

    In particular, Eq.~\ref{eq:beta-alpha-bis} applies to the maximum and minimum welfare values. 
    This, together with Eq.~\ref{eq:min-frac-max-lemma-bis}
    finishes the proof.
    \end{proof}

\begin{lemma}
    \label{thm:acc-rates-existence}
    Let $\spec$ be a DoR property with $\spec(\tau) = |\welfare{a}(\tau) - \welfare{b}(\tau)|$, and $\welfare{g}(\tau) = \numer^g(\tau)/\den^g(\tau)$.
    Let $0\leq l < u\leq 1$ be a pair of welfare bounds. The set of shields
    \begin{multline*}
         \sShieldFeasBounded^{T, N} \coloneqq \set{\sshield\in\sShield \mid \forall \tau \in \Trfeas^t_{\sgen, \sshield} \cap \Trbalance_N^T \;.\; \\ 
        \forall g\in \set{a,b}\;.\;
        l\leq \welfare{g}(\tau) \leq u }
    \end{multline*}
is not empty for $N \geq \left\lceil \frac{1}{u-l}\right\rceil$.
\end{lemma}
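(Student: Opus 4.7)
My plan is to prove existence by exhibiting an explicit shield $\sshield^*$ in $\sShieldFeasBounded^{T,N}$. Since the required property only involves the per-group counts $\numer^g,\den^g$ (and not the recommendation $r$ or the cost $c$ carried by each input), I will design $\sshield^*$ to treat each group independently and to ignore $r$ and $c$ altogether. Concretely, define $\sshield^*$ so that on seeing an input from group $g$, letting $k$ denote the number of group-$g$ inputs observed so far including the current one,
\[
\sshield^*\bigl(\tau',(g,r,c)\bigr) \;=\; \indicator{\lceil l k \rceil > \lceil l(k-1)\rceil}.
\]
Since $0 \le l \le 1$, the increment $\lceil l k \rceil - \lceil l(k-1)\rceil$ is always $0$ or $1$, so this is a well-defined binary decision rule.

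The first key step is to verify that for any trace $\tau\in\Trfeas^T_{\sgen,\sshield^*}$, the acceptance count on group $g$ equals $\numer^g(\tau)=\lceil l\,\den^g(\tau)\rceil$. This follows by telescoping along the subsequence of group-$g$ inputs: letting $n_g=\den^g(\tau)$,
\[
\numer^g(\tau) \;=\; \sum_{k=1}^{n_g}\bigl(\lceil l k\rceil - \lceil l(k-1)\rceil\bigr) \;=\; \lceil l n_g\rceil.
\]
Here I use that $\numer^g$ and $\den^g$ are additive single-counter statistics, which justifies reducing the computation to a per-individual telescoping sum.

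The second step is to check the two-sided welfare bound on any $N$-balanced trace. For each group $g$, set $n_g=\den^g(\tau)\ge N$. The lower bound is immediate: $\welfare{g}(\tau)=\lceil l n_g\rceil/n_g\ge l$. For the upper bound, I bound $\lceil l n_g\rceil \le l n_g+1$, so
\[
\welfare{g}(\tau) \;\le\; l + \frac{1}{n_g} \;\le\; l + \frac{1}{N} \;\le\; l + (u-l) \;=\; u,
\]
where the last inequality uses $N\ge\lceil 1/(u-l)\rceil\ge 1/(u-l)$. Hence $\welfare{g}(\tau)\in[l,u]$ for each $g$, so $\sshield^*\in\sShieldFeasBounded^{T,N}$.

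I expect no real obstacle beyond guessing the construction: once the rule $\numer^g(\tau)=\lceil l\,\den^g(\tau)\rceil$ is in place, both bounds reduce to the elementary observation that when $n_g\ge 1/(u-l)$ the interval $[l n_g,\,u n_g]$ has length at least $1$ and therefore contains the integer $\lceil l n_g\rceil$. The one subtlety to handle cleanly is the edge case $l=0$, where the shield always rejects and $\welfare{g}(\tau)=0\in[0,u]$ trivially; this fits the same formula since $\lceil 0\cdot n_g\rceil=0$.
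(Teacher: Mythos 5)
Your proof is correct and takes essentially the same approach as the paper's own proof: both construct an acceptance sequence $\numer^g = \lceil l\cdot\den^g\rceil$, verify that the increments lie in $\{0,1\}$, and bound the welfare from above using $N \geq 1/(u-l)$. Your version is slightly more explicit in defining the shield as a concrete function rather than arguing about the existence of an incremental sequence, but the underlying mathematics is identical.
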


\begin{proof}[Proof]
    For a shield to exist that can enforce bounds $[l, u]$ on the welfare, there must exist, for every value of $\den^g(\tau)$, 
    at least one way of deciding for increasing or not $\numer^g(\tau)$ that maintains the welfare in the desired bounds. 
    Since we do not know \textit{a priori} 
    the value of $\den^g(\tau)$, this decision must be incremental, 
    and be such that the welfare is maintained for any value of $\den^g(\tau)$. 

    To express this, there needs to exist a sequence $(x_n)\subseteq\mathbb N$ for all $n\geq N$ such that
    \begin{equation}\label{eq:xn}
        l \leq \frac{x_n}{n} \leq u, \quad \mbox{and} 
        \quad x_{n+1} - x_n \in \{0,1\}.
    \end{equation}
    Given $l$, and $u$, if $\den^g(\tau)$ is at least $N$ for a given group $g$, the shield can force $\numer^g(\tau)$ to be exactly $x_n$ to ensure the bound on welfare is met. 

    The condition in Eq.~\ref{eq:xn} can be 
    reformulated as $ ln \leq x_n \leq  un$,
    and since $x_n$ needs to be an integer, 
    we can tighten it to 
    \begin{equation}\label{eqn:local2}
        \left\lceil l n \right\rceil \leq x_n \leq 
        \left\lfloor u n \right\rfloor.
    \end{equation}
    One option is to try $x_n = \left\lceil l n \right\rceil$.
    We have to prove that this choice satisfies two conditions: 
    (i) $x_{n+1} - x_n \in \{0,1\}$, and
    (ii) Equation~\ref{eqn:local2}.

    \begin{enumerate}
        \item[(i)] 
        This is true for any sequence $x_n$ built as the integer part of $nl$, where $l\in[0,1]$. 
        For any number $x$, it is known that 
        $x = \lceil x\rceil - \{x\}$, where $0 \leq \{x\} < 1$. 
        Applying this inequality twice, we get 
        $x_{n+1} - x_n = \lceil l(n+1)\rceil - \lceil l n\rceil
         < l (n+1) - \lceil l n\rceil \leq l(n+1)- l n = 1+ l < 2$. 
        Since $\lceil l (n+1)\rceil - \lceil l n\rceil$ is an integer strictly smaller than 2, it is smaller or equal than 1. It is also clearly non-negative.

        \item[(ii)]
        By construction, $l n \leq \lceil l n\rceil$. 
        Now we have to see that $\lceil l n\rceil \leq u n$. 
        If $\lceil l n\rceil = l n$, 
        then for any $n\geq 1$, $x_n \leq u n$ on account of $l < u$. 
        If $\lceil l n\rceil = l n +1$, 
        we need $l n + 1 \leq u n$, 
        which is equivalent to $n \geq \frac{1}{u-l}$. 
        Since $n$ needs to be an integer, 
        selecting $N = \left\lceil \frac{1}{u-l}\right\rceil$ ensures this condition is satisfied. 
    \end{enumerate}
\end{proof}

\noindent\textbf{Theorem \ref{thm:correctness-staticBW}} (Conditional correctness of \StaticBAR shields)

\emph{Let $\spec$ be a DoR fairness property.
    Let $l,u$ be welfare bounds such that $u-l\leq \kappa$.
    For a given \StaticBAR shield $\sshield$, 
    let $\tau = \tau_1\ldots\tau_m\in \Trfeas^{mT}$ be a trace with $\len{\tau_i}=T$ for each $i\leq m$.
    If Assump.~\ref{assump:static-BAR} holds, then $\spec(\tau)\leq \kappa$.}

\begin{proof}
    This is a direct consequence of Lemmas~\ref{prop:acc-rates}~and~\ref{thm:acc-rates-existence}.
    If Assump.~\ref{assump:static-BAR} holds, Lemma~\ref{thm:acc-rates-existence} ensures that the set of shields is non-empty. 
    Furthermore, any such shield satisfies the fairness condition $\spec(\tau)\leq \kappa$ for any trace in $\tau\in\Trfeas^{mT}$ 
    by Lemma~\ref{prop:acc-rates}.
\end{proof}

\subsection{Dynamic Shielding}

\begin{lemma}
    \label{lem:buff-shield-exists}
    Let $\spec$ be a DoR fairness property 
    with $\spec(\tau) = |\welfare{a}(\tau) - \welfare{b}(\tau)|$, and $\welfare{g}(\tau) = \numer^g(\tau)/\den^g(\tau)$.
    Let
    $\tau_1$ be a trace and $\kappa \geq 0$. 
    There exists a shield $\sshield\in\sShield$ such that
    every trace $\tau_2\in \Trfeas^T_{\sgen, \sshield} \cap S$
    satisfies $\DP(\tau_1\tau_2)\leq \kappa$, where 
    \begin{multline*}
    S = \set{\tau_2\in (\aX\times\aY)^T\::\: \\  \frac{1}{\den^a(\tau_1\tau_2)} + \frac{1}{\den^b(\tau_1\tau_2)} \leq \kappa + \spec(\tau_1)}
    \end{multline*}
\end{lemma}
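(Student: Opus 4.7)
The plan is to prove the lemma constructively: exhibit a shield $\sshield$ depending on $\tau_1$ and $\kappa$, then verify the claimed bound. Let me set notation: write $n_1^g = \den^g(\tau_1)$, $k_1^g = \numer^g(\tau_1)$, and for a completion $\tau_1\tau_2$, $n^g = \den^g(\tau_1\tau_2)$, $k^g = \numer^g(\tau_1\tau_2)$. Since $\den^g$ and $\numer^g$ are additive single-counter statistics and the shield can only decide accept/reject on the individuals appearing in $\tau_2$, the shield's freedom is $k^g \in [k_1^g,\, k_1^g + (n^g - n_1^g)]$, and hence $\welfare{g}(\tau_1\tau_2)$ ranges over integer multiples of $1/n^g$ in $[L^g, U^g]$, where $L^g = k_1^g/n^g$ and $U^g = (k_1^g + n^g - n_1^g)/n^g$.

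My approach is to adapt the construction from Lemma~\ref{thm:acc-rates-existence}. Assume WLOG $\welfare{a}(\tau_1) \geq \welfare{b}(\tau_1)$, so $\spec(\tau_1) = \welfare{a}(\tau_1) - \welfare{b}(\tau_1)$, and pick a target welfare $w^\star$ in $[\welfare{b}(\tau_1), \welfare{a}(\tau_1)]$ — the precise choice will be dictated by the algebra below. The shield then uses a target-chasing rule analogous to the sequence construction in Lemma~\ref{thm:acc-rates-existence}: on each arriving group-$g$ individual, accept iff doing so keeps the running ratio $k^g/n^g$ within $1/n^g$ of $w^\star$; otherwise reject. This rule maintains $|k^g/n^g - w^\star| \leq 1/n^g$ once the running ratio has entered a $1/n^g$-neighborhood of $w^\star$, and I would verify using the $S$-condition that this settling happens within $\tau_2$. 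The conclusion then follows by the triangle inequality, $\spec(\tau_1\tau_2) \leq |\welfare{a}(\tau_1\tau_2) - w^\star| + |\welfare{b}(\tau_1\tau_2) - w^\star|$, combined with the $S$-condition $\tfrac{1}{n^a} + \tfrac{1}{n^b} \leq \kappa + \spec(\tau_1)$.

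The principal obstacle is the algebraic step that turns the $S$-condition into the tight bound $\spec(\tau_1\tau_2) \leq \kappa$: the $\spec(\tau_1)$ on the right of the $S$-condition must be cancelled against the shield's ability to close the initial gap between $\welfare{a}(\tau_1)$ and $\welfare{b}(\tau_1)$ by steering toward a common $w^\star$, so the choice of $w^\star$ is delicate. I would handle this by a case split using the explicit parametrizations $L^g = (n_1^g/n^g)\,\welfare{g}(\tau_1)$ and $U^g = 1 - (n_1^g/n^g)(1 - \welfare{g}(\tau_1))$: in the case where $[L^a, U^a] \cap [L^b, U^b] \neq \emptyset$, any $w^\star$ in the intersection gives bias at most the quantization error $\tfrac{1}{n^a} + \tfrac{1}{n^b}$, and here the accounting needs an extra subtraction of $\spec(\tau_1)$ using how far both welfares have been pulled from their initial values; in the disjoint case, the minimum achievable bias is $L^a - U^b$, which one rewrites in terms of $\welfare{g}(\tau_1)$ to recognize it as $\spec(\tau_1)$ minus the slack $(n^a - n_1^a)/n^a + (n^b - n_1^b)/n^b$ gained in $\tau_2$, and then the $S$-condition converts this directly into a bound of $\kappa$. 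A secondary subtlety is that the online rule must commit without knowing the final $(n^a, n^b)$, so verifying that the target-chasing invariant survives arbitrary orderings of the inputs in $\tau_2$ requires the usual discrepancy-style argument from the proof of Lemma~\ref{thm:acc-rates-existence}.
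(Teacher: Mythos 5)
Your strategy---steering both groups toward a single common target $w^\star$---is genuinely different from the paper's construction, which has each group $g$ track its \emph{own} initial welfare $\welfare{g}(\tau_1)$: a floor sequence for the group with the larger initial welfare and a ceiling sequence for the other. And the common-target idea cannot be made to work. Writing $n^g=\den^g(\tau_1\tau_2)$ and $n_1^g=\den^g(\tau_1)$, the $S$-condition gives $1/n^a+1/n^b\le\kappa+\spec(\tau_1)$, a budget that exceeds $\kappa$ by exactly $\spec(\tau_1)$; the only way to spend that extra $\spec(\tau_1)$ is to \emph{not} close the welfare gap. With per-group targets, the floor keeps $\welfare{a}(\tau_1\tau_2)\le\welfare{a}(\tau_1)$ and the ceiling keeps $\welfare{b}(\tau_1\tau_2)\ge\welfare{b}(\tau_1)$, so the positive bias is at most $\spec(\tau_1)\le\kappa$ outright, while the negative bias satisfies $\welfare{a}-\welfare{b}\ge(\welfare{a}(\tau_1)-1/n^a)-(\welfare{b}(\tau_1)+1/n^b)=\spec(\tau_1)-(1/n^a+1/n^b)\ge-\kappa$ precisely because of the $\spec(\tau_1)$ offset. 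A common $w^\star$ drives both welfares toward $w^\star$, centering the bias at $0$, and then the quantization error $1/n^a+1/n^b$ alone would have to be $\le\kappa$, which the $S$-condition does not give.

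Two further gaps. Your disjoint-case algebra is wrong: the correct identity is $L^a-U^b=\spec(\tau_1)-\left[\frac{n^a-n_1^a}{n^a}\welfare{a}(\tau_1)+\frac{n^b-n_1^b}{n^b}(1-\welfare{b}(\tau_1))\right]$, not $\spec(\tau_1)$ minus $\frac{n^a-n_1^a}{n^a}+\frac{n^b-n_1^b}{n^b}$; and either way the slack is of order $1-n_1^g/n^g$, which has no useful relation to the $1/n^g$ terms in the $S$-condition, so the step ``the $S$-condition converts this to a bound of $\kappa$'' does not go through. (In fact $L^a-U^b\le\kappa$ here follows simply from $\spec(\tau_1)\le\kappa$ and the nonnegativity of the bracketed term, with no appeal to $S$.) And the reachability issue you flag but defer is fatal to the common-target rule as sketched: $w^\star$ lies outside $[L^g,U^g]$ whenever few or no new group-$g$ individuals appear during $\tau_2$, and the $S$-condition---which constrains $\den^g(\tau_1\tau_2)$, not $\den^g(\tau_2)$---cannot rule this out; in that case $|\welfare{g}(\tau_1\tau_2)-w^\star|$ is not bounded by a quantization term and your triangle inequality fails. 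The paper's choice of $\welfare{g}(\tau_1)$ as the per-group target is always inside $[L^g,U^g]$ by construction, which is precisely what lets the argument go through with no case split.
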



\begin{proof}
    The proof of this result is analogous to that of Lemma~\ref{thm:acc-rates-existence},
    with a slightly more convoluted argument.

    Let $n^1_a = \den^a(\tau_1)$, 
    $n^1_{a1} = \numer^a(\tau_1)$, 
    $n^1_b = \den^b(\tau_1)$, 
    and $n^1_{b1} = \numer^{b}(\tau_1)$.
    Without loss of generality, we can assume that 
    $n^1_{a1}/n^1_a - n^1_{b1}/n^1_b \geq 0$. 
    The alternative case is analogous.

    For a shield to exist that can enforce $\spec(\tau_1\tau_2) \leq \kappa$ there must exist, for every value of $\den^g(\tau_1\tau_2)$ (in demographic parity, the amount of individuals of a group),
    at least one way of deciding acceptance and rejection (value of $\numer^g(\tau_1\tau_2)$) 
    that maintains the fairness property in the target bound.
    Since we do not know \textit{a priori} how many individuals of each group will appear, this decision must be incremental, 
    and be such that the fairness property is maintained for any number of individuals. 

    To express this, a new trace $\tau_2$ 
    is enforceable if $n_a = \den^{a}(\tau_2) \geq N_a$ and 
    $n_b = \den^b(\tau_2)\geq N_b$ if there exist two sequences $(x_{n_a}), (y_{n_b})\subseteq\mathbb N$
    such that for all $n^a\geq N_a$ and $n^b \geq N_b$
    \begin{equation}\label{eq:local3}
        \left|
        \frac{n^1_{a1} + x_{n_a}}{n^1_a + n_a} -
        \frac{n^1_{b1} + y_{n_b}}{n^1_b + n_b}
        \right| \leq \kappa
    \end{equation}
    and for both sequences $x_{n_a+1} - x_{n_a} \in \{0,1\}$ and 
    $y_{n_a+1} - y_{n_a} \in \{0,1\}$.

    With the spirit of maintaining the welfare bounds as a proxy to maintaining fairness, we try
    \[
    x_{n_a} = \left\lfloor \frac{n^1_{a1}}{n^1_a}n_a\right\rfloor, \quad \mbox{and} \quad
    y_{n_a} = \left\lceil \frac{n^1_{b1}}{n^1_b}n_b\right\rceil.
    \]
    Using the same argument as in the proof of Theorem~\ref{thm:acc-rates-existence}, point (i), the conditions on $x_{n_a}$ and $y_{n_b}$ incrementing by 0 or 1 are met by the fact that $n^1_{a1}\leq n^1_a$ 
    and  $n^1_{b1}\leq n^1_b$.

    By definition of the floor function and ceiling functions
    \begin{align*}
        \frac{n^1_{a1} + x_{n_a}}{n^1_a + n_a} \leq 
        \frac{n^1_{a1} + \frac{n^1_{a1}}{n^1_a}n_a }{n^1_a + n_a} = 
        \frac{n^1_{a1}}{n^1_a}, \\
        \frac{n^1_{b1} + y_{n_b}}{n^1_b + n_b} \geq 
        \frac{n^1_{b1} + \frac{n^1_{b1}}{n^1_b}n_b }{n^1_b + n_b} = 
        \frac{n^1_{b1}}{n^1_b}.
    \end{align*}
    Therefore
    \begin{equation}
        \frac{n^1_{a1} + x_{n_a}}{n^1_a + n_a} -
        \frac{n^1_{b1} + y_{n_b}}{n^1_b + n_b}
        \leq 
        \frac{n^1_{a1}}{n^1_a} -
        \frac{n^1_{b1}}{n^1_b} = \spec(\tau_1) \leq \kappa
    \end{equation}
    To prove Eq.~\ref{eq:local3}, we still have to prove that
    \begin{equation}\label{eq:local4}
        \frac{n^1_{a1} + x_{n_a}}{n^1_a + n_a} -
        \frac{n^1_{b1} + y_{n_b}}{n^1_b + n_b}
        \geq -\kappa.    
    \end{equation}
    
    By the definition of the floor function
    \[
    \frac{n^1_{a1} + x_{n_a}}{n^1_a + n_a} \geq 
    \frac{n^1_{a1} + \frac{n^1_{a1}}{n^1_a}n_a  - 1}{n^1_a + n_a} =
    \frac{n^1_{a1}}{n^1_a} - \frac{1}{n^1_a + n_a},
    \]
    and by the definition of the ceiling function
    \[
    \frac{n^1_{b1} + y_{n_b}}{n^1_b + n_b} \leq 
    \frac{n^1_{b1} + \frac{n^1_{b1}}{n^1_b}n_b +1}{n^1_b + n_b} = 
    \frac{n^1_{b1}}{n^1_b} + \frac{1}{n^1_b + n_b}.
    \]
    Putting the previous two inequalities together, we have
    \[
    \frac{n^1_{a1} + x_{n_a}}{n^1_a + n_a} -
    \frac{n^1_{b1} + y_{n_b}}{n^1_b + n_b} \geq \spec(\tau_1) - 
    \left(
    \frac{1}{n^1_a + n_a} + 
    \frac{1}{n^1_b + n_b}
    \right).
    \]
    To ensure that Equation~\ref{eq:local4} holds, it is sufficient to 
    ensure that
    \begin{equation*}
        \spec(\tau_1) - 
    \left(
    \frac{1}{n^1_a + n_a} + 
    \frac{1}{n^1_b + n_b}
    \right) \geq -\kappa.
    \end{equation*}
    Rewriting the previous inequality we arrive to
    \begin{equation}
         \left(
    \frac{1}{n^1_a + n_a} + 
    \frac{1}{n^1_b + n_b}
    \right) \leq \kappa + \spec(\tau_1),
    \end{equation}
    which is the condition defining the set $S$.
    Therefore the proposed sequences $(x_{n_a})$ and $(y_{n_b})$ satisfy Equation~\ref{eq:local3} for traces in $S$.
\end{proof}

\noindent\textbf{Theorem \ref{thm:correctness-dyn-shields}} (Conditional correctness of \Dyn shields)
\emph{Let $\spec$ be a DoR fairness property.
    Let $\sshield$ be a  \Dyn shield that uses $\sShieldFeasDyn(\cdot)^{\sgen,T}$ as the set of available shields.
    Let $\tau =  \tau_1\ldots\tau_m\in \Trfeas^{mT}_{\sgen,\sshield}$ be a trace with $\len{\tau_i}=T$ for each $i\leq m$.
    Suppose for every $i\leq m$, $\tau_1\ldots\tau_i$ fulfills Assump.~\ref{assump:dynamic}.
    Then $\DP(\tau)\leq \kappa$.}
\begin{proof}
    This is a direct consequence of Lemma~\ref{lem:buff-shield-exists}.  
    If Assump.~\ref{assump:dynamic} holds, Lemma~\ref{lem:buff-shield-exists} ensures that the set of shields is non-empty. 
    By construction, any such shield satisfies $\spec(\tau)\leq \kappa$ for any trace in $\tau\in\Trfeas^{mT}_{\sgen,\sshield}$.
\end{proof}


    
    \section{Experimental Setup}
    \label{sec:experimental_setup_appendix}
    In this appendix, we describe our experimental setup in detail. 

\begin{table*}[t]
    \centering
    \begin{tabular}{l c c c c c c c c c c}
    \toprule
 Dataset & Task & SensAttr & Instances & Num. Feat. & Cat. Feat. & $\% y_0$ & $\% y_1$ & $\% g_a$ & $\% g_b$ & Statistical Parity \\ 
     \midrule
 \texttt{Adult} & income & race & 43131 & 5 & 7 & 75.10 & 24.90 & 9.80 & 90.20 & 0.14 \\ 
 \texttt{Adult} & income & gender & 45222 & 5 & 7 & 75.22 & 24.78 & 32.50 & 67.50 & 0.20 \\ 
 \texttt{Bank} & credit & age & 41188 & 9 & 10 & 88.73 & 11.27 & 2.59 & 97.41 & 0.13 \\ 
 \texttt{\textsc{Compas}} & recidivism & gender & 6172 & 5 & 4 & 54.49 & 45.51 & 19.04 & 80.96 & 0.13 \\ 
 \texttt{\textsc{Compas}} & recidivism & race & 6172 & 5 & 4 & 54.49 & 45.51 & 65.93 & 34.07 & 0.10 \\ 
 \texttt{German} & credit & gender & 1000 & 6 & 13 & 30.00 & 70.00 & 31.00 & 69.00 & 0.07 \\ 
 \texttt{German} & credit & age & 1000 & 6 & 13 & 30.00 & 70.00 & 19.00 & 81.00 & 0.15 \\ 
\bottomrule
    \end{tabular}
    \caption{Datasets characteristics }
    \label{tab:datasets-info}
\end{table*}

\subsection{Computing infrastructure}
All experiments were performed with a workstation with AMD Ryzen 9 5900x CPU, Nvidia GeForce RTX 3070Ti GPU, 32GB of RAM, running Ubuntu 20.04.
The code to reproduce our experiments is included as part of the supplementary material.

\subsection{Datasets}
We used four tabular datasets in our experiments, all common benchmarks in the fairness community: Adult~\cite{misc_adult_2}, COMPAS~\cite{compas},
German Credit~\cite{dua2017uci} and Bank Marketing~\cite{misc_bank_marketing_222}.
Details on the task, sensitive attributes, size of the dataset, number of numerical and categorical features, as well as existing bias can be found in Table~\ref{tab:datasets-info}.

\subsection{Training ML classifiers}
To train our ML models, we adapted the implementation provided by~\cite{han2024ffb}, using the same neural network, train-test splits, and most training hyper-parameters set as default in their implementation, tuning only the hyper-parameters related to fairness.

We use fixed architecture multi-layer perception (MLP) with three hidden layers with sizes $512$, $256$, and $64$ in all our experiments.
In each case, the model is trained for 150 epochs with batches of 1024 instances, with the exception of the $\texttt{German}$ dataset, which we trained with batches of 128, as the dataset has only 1000 instances.
We use the Adam optimizer~\cite{adam}, with a learning rate of 0.01.

\subsection{Learning Algorithms}
To train our classifiers, we used the following methods from the in-processing fairness literature:

\begin{itemize}
    \item Differential Demographic Parity (DiffDP) is a gap regularization method for demographic parity. DiffDP introduces a term in the loss function that penalizes differences in the prediction rates between different demographic groups.
    \item The Hilbert-Schmidt Independence Criterion (HSIC) is a statistical test used to measure the independence of two random variables. Adding an HSIC term measuring the independence between prediction accuracy and sensitive attributes to the loss has been used as a fair learning method~\cite{hsic}.
    \item Learning adversarially fair and transferable representations (LAFTR) is a method proposed by~\cite{madras2018learning}, where the classifier learns an intermediate representation of the data that minimizes classification error while simultaneously minimizing the ability of an adversary to predict sensitive features from the representation.
    \item Prejudice Remover (PR)~\cite{kamishima2012fairness} adds a term to the loss that penalizes mutual information between the prediction accuracy and the sensitive attribute.
\end{itemize}

As a baseline, we trained a fifth classifier simply minimizing empirical risk (ERM).

\subsubsection{Hyperparameter tuning.}

Each of the in-processing fairness algorithms depends on the value of certain parameters that indicate the trade-off in the loss function between prediction accuracy and fairness.
For each training algorithm, we manually fine-tuned the parameters to obtain a good performance with the same parameter values across all benchmarks. 
Unfortunately, the parameters of different algorithms have different interpretations and characteristic dimensions, so comparing them is not informative. We detail the ones we used in our experiments, and their meaning.
\begin{itemize}
    \item For DiffDP, a parameter $\lambda$ controls the contribution of the regularization term in the loss. 
    We tried a range of $\lambda\in[0.5, 10]$.
    We use $\lambda = 1$.
    \item In HSIC, a parameter $\lambda$ controls the importance of the HSIC term in the loss function. We tried a range $\lambda\in [10, 500]$. We use $\lambda = 100$.
    \item In LAFTR, the loss is composed of three terms: 
    one that penalizes reconstruction error ($L_x$), 
    one that penalizes prediction error ($L_y$), 
    and one that penalizes the adversary's error when trying to obtain information about sensitive features from the representation ($L_z$).
    Three parameters $A_x, A_y, A_z$ control the weights of each term in the loss. We use $A_x=8$, $A_y = 4 $, $A_z = 2.1$.
    We tried a range or $[1,10]$ for each parameter.
    \item For PR, a parameter $\lambda$ controls the weight of the loss term that penalizes mutual information between prediction accuracy and the sensitive attribute. 
    We tried a range of $\lambda\in [0.01,0.5]$.
    We use $\lambda = 0.06$.
\end{itemize}
In Tables~\ref{tab:ml-performance-adult},~\ref{tab:ml-performance-bank_marketing},~\ref{tab:ml-performance-compas},~\ref{tab:ml-performance-german} we show the metrics of each trained model on each dataset. 
For each case, we present accuracy ($\mathtt{acc}$), average precision ($\mathtt{ap}$), area under the curve ($\mathtt{auc}$), and the $F1$ score ($\mathtt{f1}$) as performance metrics, 
while demographic parity ($\DP$) and equal opportunity ($\EO$) are presented as fairness metrics.
The numbers are presented as percentages.
In each column, the best performer is marked in boldface.

\begin{table}[t]
    \centering
    \begin{tabular}{l c c c c c c }
    \toprule
   & $\mathtt{acc}$ & $\mathtt{ap}$ & $\mathtt{auc}$ & $\mathtt{f1}$ & $\DP$ & $\EO$ \\ 
     \midrule
 $\mathtt{ERM}$ & 91.3 & 62.0 & 94.2 & \textbf{ 56.8} & 10.5 & \textbf{ 3.7} \\ 
 $\mathtt{DiffDP}$ & 90.4 & 57.3 & 93.1 & 40.3 & 3.4 & 26.2 \\ 
 $\mathtt{HSIC}$ & \textbf{ 91.4} & \textbf{ 62.5} & \textbf{ 94.3} & 56.7 & 7.0 & 6.6 \\ 
 $\mathtt{LAFTR}$ & 90.5 & 60.0 & 93.6 & 39.8 & 6.0 & 4.4 \\ 
 $\mathtt{PR}$ & 90.7 & 58.7 & 93.4 & 49.4 & \textbf{ 3.3} & 34.2 \\ 
\bottomrule
    \end{tabular}
    \caption{Performance of the ML models. Dataset: \texttt{Bank} }
    \label{tab:ml-performance-bank_marketing}
\end{table}

\begin{table*}[t]
    \centering
    \begin{tabular}{l c c c c c c | c c c c c c }
    \toprule
 & \multicolumn{6}{c|}{race} & \multicolumn{6}{c}{gender}\\ \midrule
   & $\mathtt{acc}$  & $\mathtt{ap}$  & $\mathtt{auc}$  & $\mathtt{f1}$  & $\DP$  & $\EO$  & $\mathtt{acc}$  & $\mathtt{ap}$  & $\mathtt{auc}$  & $\mathtt{f1}$  & $\DP$  & $\EO$ \\ 
\midrule 
$\mathtt{ERM}$  & \textbf{85.2}  & \textbf{78.8}  & \textbf{91.2}  & \textbf{66.4}  & 10.9  & 6.1  & \textbf{85.2}  & \textbf{78.6}  & \textbf{91.1}  & \textbf{65.7}  & 16.5  & 9.6 \\ 
$\mathtt{DiffDP}$  & 84.4  & 76.3  & 89.5  & 62.3  & \textbf{5.4}  & 4.0  & 82.5  & 71.4  & 87.2  & 53.9  & 0.2  & 32.8 \\ 
$\mathtt{HSIC}$  & 84.9  & 78.6  & 91.1  & 64.4  & 8.7  & \textbf{3.1}  & 83.2  & 72.7  & 87.3  & 56.9  & 1.8  & 28.0 \\ 
$\mathtt{LAFTR}$  & 84.4  & 77.6  & 90.9  & 62.0  & 9.5  & 8.2  & 85.1  & 78.6  & 91.0  & 65.1  & 14.3  & \textbf{1.5} \\ 
$\mathtt{PR}$  & 84.2  & 75.7  & 89.0  & 61.4  & 5.6  & 4.0  & 82.5  & 70.6  & 86.5  & 53.4  & \textbf{0.1}  & 33.1 \\ 
\bottomrule
    \end{tabular}
    \caption{Performance of the ML models. Dataset: \texttt{Adult} }
    \label{tab:ml-performance-adult}
\end{table*}

\begin{table*}[t]
    \centering
    \begin{tabular}{l c c c c c c | c c c c c c }
    \toprule
 & \multicolumn{6}{c|}{gender} & \multicolumn{6}{c}{race}\\ \midrule
   & $\mathtt{acc}$  & $\mathtt{ap}$  & $\mathtt{auc}$  & $\mathtt{f1}$  & $\DP$  & $\EO$  & $\mathtt{acc}$  & $\mathtt{ap}$  & $\mathtt{auc}$  & $\mathtt{f1}$  & $\DP$  & $\EO$ \\ 
\midrule 
$\mathtt{ERM}$  & \textbf{65.3}  & 62.8  & 68.6  & 58.6  & 16.5  & 18.1  & 64.8  & 63.3  & 69.1  & 59.8  & 14.3  & 15.5 \\ 
$\mathtt{DiffDP}$  & 63.1  & 63.4  & 68.7  & 54.7  & 12.1  & 11.0  & 64.6  & 61.9  & 68.1  & 57.8  & 9.1  & 14.8 \\ 
$\mathtt{HSIC}$  & 64.3  & 63.0  & 68.7  & 56.2  & 14.8  & 9.8  & 64.4  & 61.9  & 68.3  & 56.9  & \textbf{8.2}  & \textbf{11.3} \\ 
$\mathtt{LAFTR}$  & 65.2  & \textbf{64.1}  & \textbf{69.9}  & \textbf{59.8}  & 17.2  & 14.8  & \textbf{65.1}  & \textbf{63.7}  & \textbf{70.3}  & \textbf{59.9}  & 13.2  & 18.2 \\ 
$\mathtt{PR}$  & 63.0  & 63.4  & 68.7  & 54.9  & \textbf{12.0}  & \textbf{8.0}  & 64.2  & 62.2  & 68.3  & 56.8  & 8.9  & 13.1 \\ 
\bottomrule
    \end{tabular}
    \caption{Performance of the ML models. Dataset: \texttt{\textsc{Compas}} }
    \label{tab:ml-performance-compas}
\end{table*}

\begin{table*}[t]
    \centering
    \begin{tabular}{l c c c c c c | c c c c c c }
    \toprule
 & \multicolumn{6}{c|}{gender} & \multicolumn{6}{c}{age}\\ \midrule
   & $\mathtt{acc}$  & $\mathtt{ap}$  & $\mathtt{auc}$  & $\mathtt{f1}$  & $\DP$  & $\EO$  & $\mathtt{acc}$  & $\mathtt{ap}$  & $\mathtt{auc}$  & $\mathtt{f1}$  & $\DP$  & $\EO$ \\ 
\midrule 
$\mathtt{ERM}$  & \textbf{75.5}  & \textbf{87.1}  & \textbf{77.4}  & \textbf{82.7}  & 5.3  & 5.5  & 74.7  & \textbf{86.3}  & \textbf{76.1}  & 82.2  & 13.8  & 14.7 \\ 
$\mathtt{DiffDP}$  & 73.0  & 85.5  & 73.9  & 81.0  & \textbf{1.1}  & 3.5  & 71.7  & 85.8  & 74.7  & 80.3  & \textbf{0.5}  & \textbf{1.8} \\ 
$\mathtt{HSIC}$  & 72.7  & 85.5  & 73.8  & 81.1  & 1.4  & \textbf{1.1}  & 73.7  & 85.7  & 74.3  & 81.9  & 4.2  & 6.0 \\ 
$\mathtt{LAFTR}$  & 73.0  & 87.0  & 75.8  & 81.1  & 8.2  & 4.1  & 73.0  & 85.2  & 73.8  & 81.1  & 10.0  & 6.4 \\ 
$\mathtt{PR}$  & 72.7  & 85.5  & 73.2  & 80.9  & 5.7  & 4.1  & \textbf{75.0}  & 85.7  & 74.8  & \textbf{82.8}  & 6.5  & 4.8 \\ 
\bottomrule
    \end{tabular}
    \caption{Performance of the ML models. Dataset: \texttt{German} }
    \label{tab:ml-performance-german}
\end{table*}

\subsection{Approximation of the Input Distribution}
\label{sec:cost_alignment}

For shield synthesis, we need a distribution of the input space $\sgen\in \distrset(\aG\times\BB\times\costset)$. 
In the ideal case, $\sgen\in \distrset(\aG\times\BB\times\costset)$ is the exact joint distribution of group membership, agent recommendation and cost. 
However, this is unrealistic most of the time, as it assumes knowledge of the underlying distribution and the classifier. 
Furthermore, the distribution of cost given by the agent may be continuous, but we assume that there is a finite set $\costset$ of costs allowed. 

For our experiments, we used 
a simple approach that is agnostic to the ML classifier.

In our approach, we assume that any recommendation is equally likely, and the cost constant.
Therefore $\costset = \{1\}$, and for all $g\in\aG$ and $r\in \BB$, 
$\sgen (g, b, 1) = p_g/2$.

As discussed, this approximation is easy to compute and agnostic to the ML classifier.





    \section{Extended Experimental Evaluation}
    \label{sec:experiments_appendix}
    In this appendix, we present further experimental results and insights that we could not fit into the main paper.

\subsection{Shield Synthesis Computation Times}
\label{sec:shield-computation-times}

As pointed out in Thm.~\ref{thm:bounded-horizon shield synthesis-bis}, 
our shield synthesis algorithm has a polynomial complexity, and the degree of the polynomial is the number 
of counters required to keep track of the fairness property. 

In our experimental evaluation we measured and enforced two fairness properties: $\DP$ and $\EO$.
For $\DP$ it is sufficient to track 4 counters: the number of instances appeared and accepted of each group. 
For $\EO$, we also need 4 counters for the number of instances appeared and accepted of each group, counting only those for which $z=1$. Furthermore, we need two extra counters: one to count all instances with $z =0$, and one to keep track of the last decision for which ground truth has not yet been revealed, for a total of 6 counters. 

In Figure~\ref{fig:comp_resources} we show the computation time and memory usage of our shield synthesis algorithm for a fixed problem with increasing time horizon. Fig.~\ref{fig:comp_resources} does not show variability, because the synthesis algorithm, as described, is deterministic.

\begin{figure}
     \centering
     \begin{subfigure}[b]{0.48\linewidth}
         \centering
         \includegraphics[width=\linewidth]{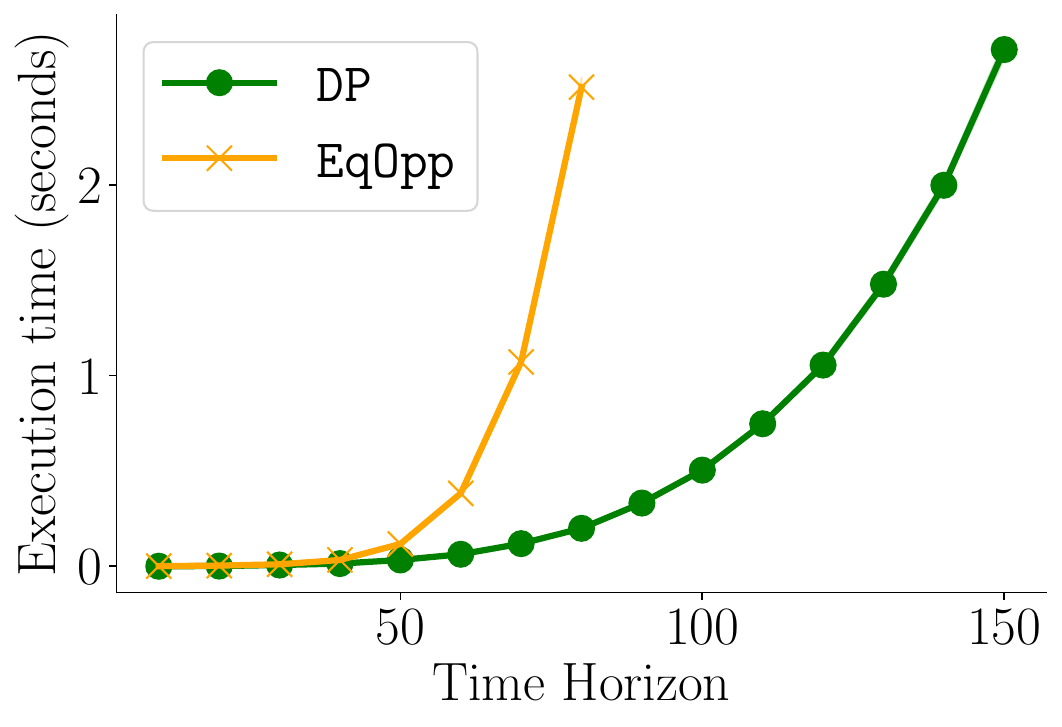}
         \caption{Time}
         \label{fig:comp_time}
     \end{subfigure}
     \hfill
     \begin{subfigure}[b]{0.48\linewidth}
         \centering
         \includegraphics[width=\linewidth]{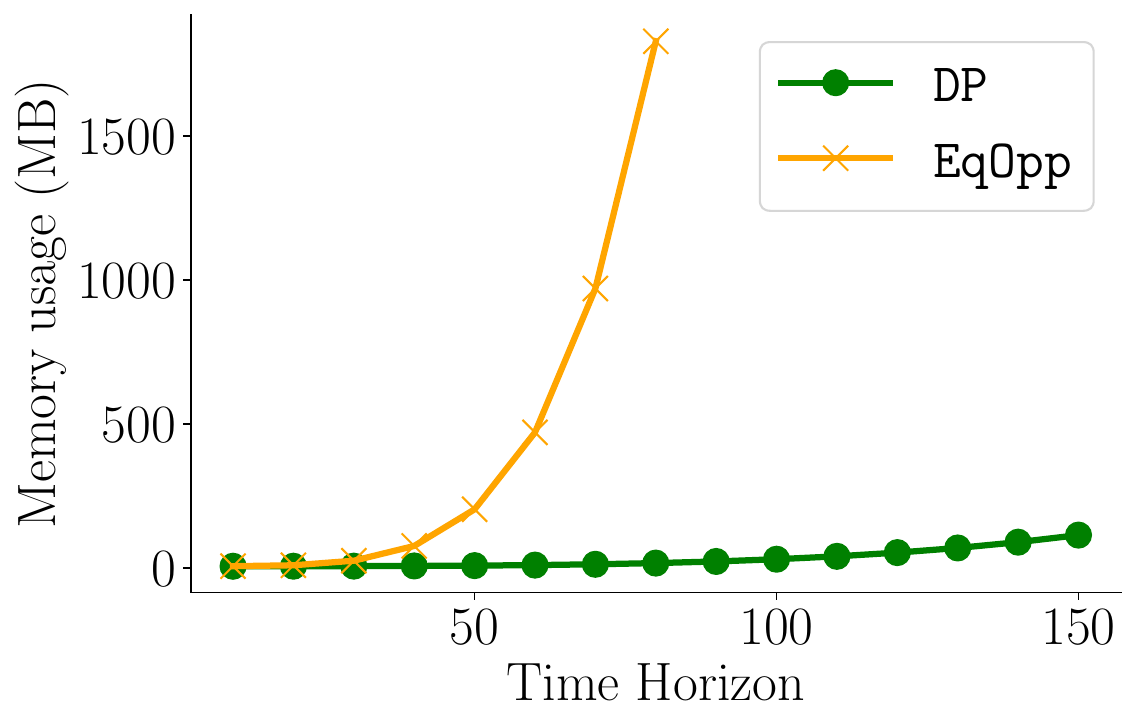}
         \caption{Memory}
         \label{fig:comp_memory}
     \end{subfigure}
        \caption{Resource usage for shield synthesis with increasing time horizons.}
        \label{fig:comp_resources}
\end{figure}

\subsection{\FinShield Shields}

\paragraph{Fairness.}
Fig.~\ref{fig:fair-distr} and Tab.~\ref{tab:fair_distr} describe the distribution of the normalized bias with and without shielding, i.e., the bias divided by the respective threshold. Here we can observe that a good portion of the runs without shield violate the fairness constraint, by contrast we can observe that our shields manage to uphold the condition. However, we want to highlight, that most runs with shield achieve a fairness value significantly below the threshold. This is more pronounced for EqOpp.

\paragraph{Utility loss.}
The focus on utility loss in the main body is justified by the strong relationship between cost and utility loss observed in Fig~\ref{fig:cost-utility-corr}. 
From the perspective of utility loss the observations highlighted in the main body of the paper remain the same. 
That is, from Fig~\ref{fig:utility-comparison-0.15} and Fig~\ref{fig:utility-comparison-0.2} we can observe that DP incurs less utility loss than EqOpp and that most of the variability comes from the dataset rather than from the ML Algorithms.
This is supported by Fig.\ref{fig:utility_environment_box} and Fig.~\ref{fig:utility_mlalgo_box}, which provide insight into the distribution of utility loss for each dataset and ML Algorithm respectively. The most striking difference is that when compared to the datasets, the utility loss distribution varies only slightly between ML Algorithms.

\begin{figure*}
   \centering
   \includegraphics[width=\linewidth]{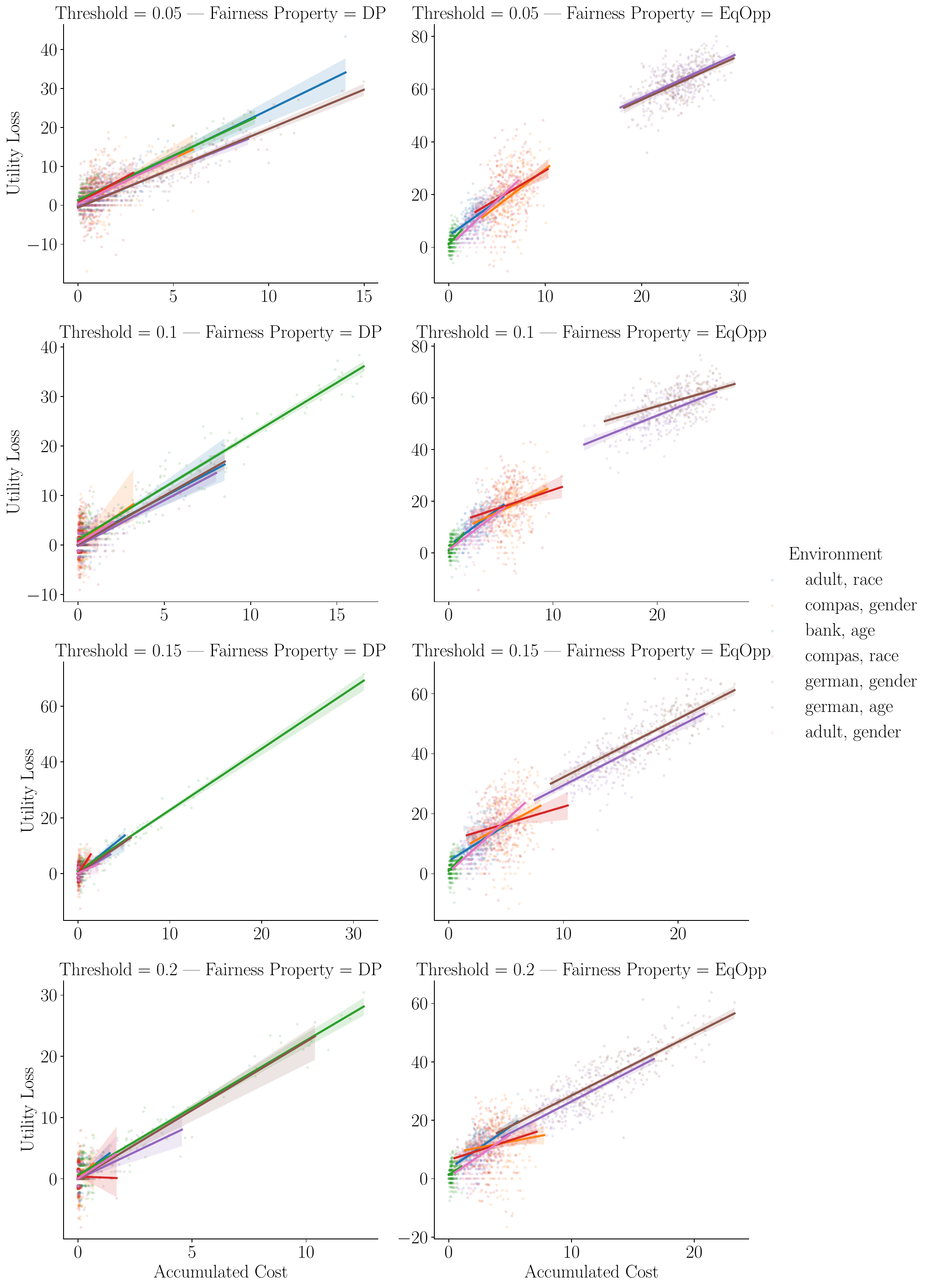}  %
    \caption{Regression plot depicting the relationship between utility loss and cost for various $\kappa$ for each dataset. DP (left) and EqOpp (right)}
    \label{fig:cost-utility-corr}
\end{figure*}

\begin{table*}[t]
\scriptsize
    \centering
\begin{tabular}{ll|rrr|rr|r}
\toprule
 &  & 25\% Quantil  & Median & 75\% Quantil & Mean & Std & Above \\
\midrule
\multirow[c]{2}{*}{DP} & No Shield & 0.38 & 0.83 & 1.59 & 1.22 & 1.29 & 42.46 \% \\
 & Static-Fair & 0.18 & 0.42 & 0.74 & 0.46 & 0.31 & 0.00\% \\
\multirow[c]{2}{*}{EqOpp} & No Shield & 0.67 & 1.76 & 3.62 \%& 2.76 & 3.01 & 65.06 \%\\
 & Static-Fair & 0.00 & 0.21 & 0.50 & 0.27 & 0.28 & 0.00 \%\\
 \bottomrule
\end{tabular}

    \caption{Statistic of normalized fairness, i.e., fairness value / threshold}
    \label{tab:fair_distr}
\end{table*}

\begin{table*}
   \centering
   \includegraphics[width=\linewidth]{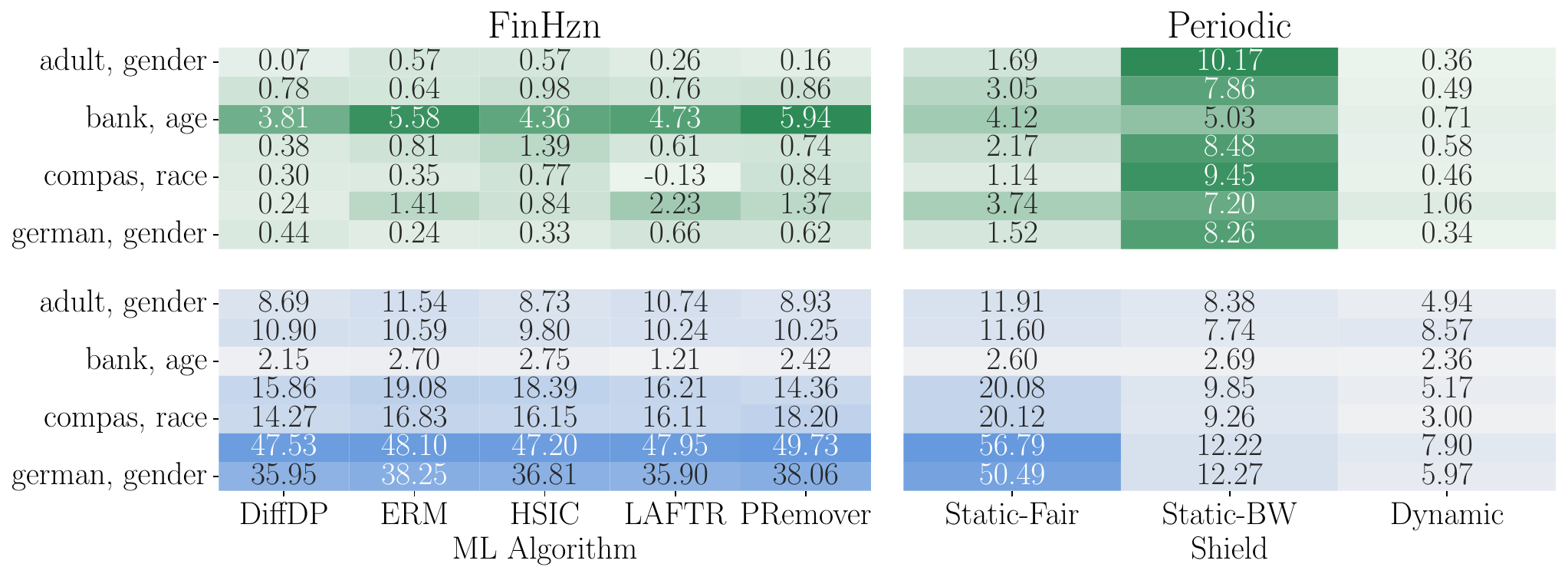}  %
    \caption{Utility loss (in \%) incurred by \FinShield shields for different ML models (left) and by periodic shields on the ERM model (right) for the fairness properties DP (top, green) and EqOpp (bottom, blue) with $\kappa=0.15$. Lighter colors indicate smaller utility loss.}
    \label{fig:utility-comparison-0.15}
\end{table*}

\begin{table*}
   \centering
   \includegraphics[width=\linewidth]{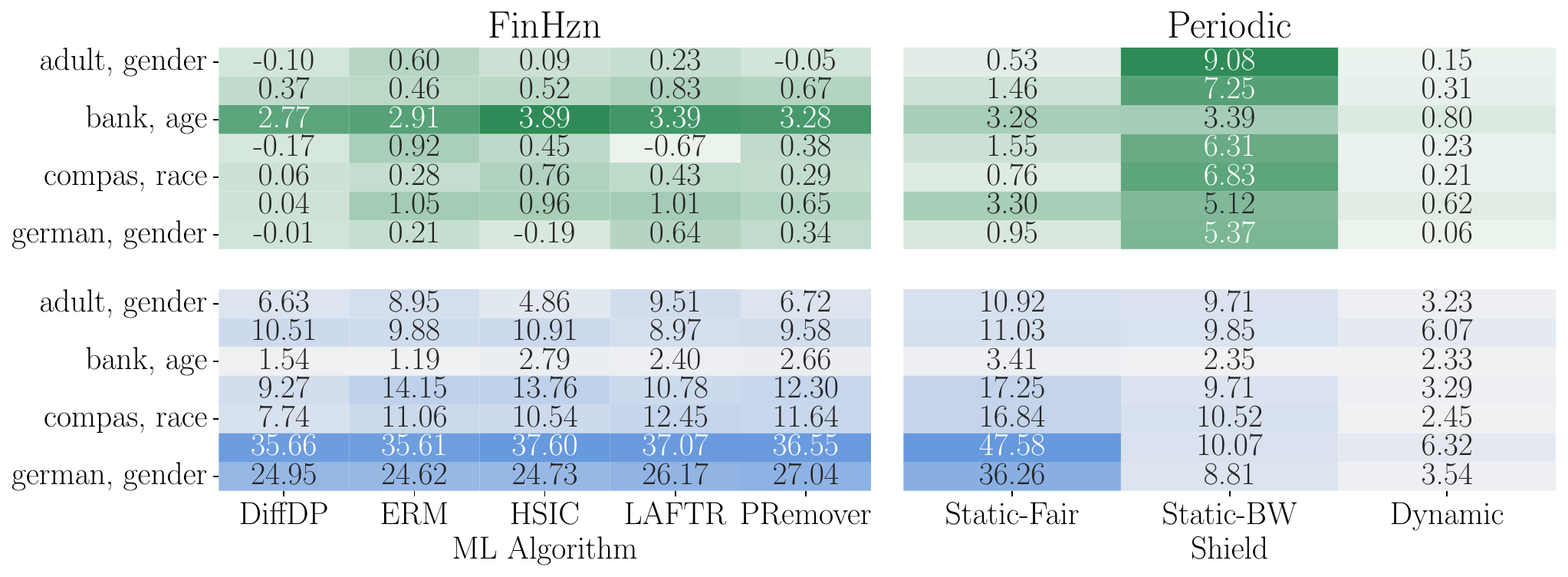}  %
    \caption{Utility loss (in \%) incurred by \FinShield shields for different ML models (left) and by periodic shields on the ERM model (right) for the fairness properties DP (top, green) and EqOpp (bottom, blue) with $\kappa=0.2$. Lighter colors indicate smaller utility loss.}
    \label{fig:utility-comparison-0.2}
\end{table*}

\begin{figure*}
   \centering
   \includegraphics[width=\linewidth]{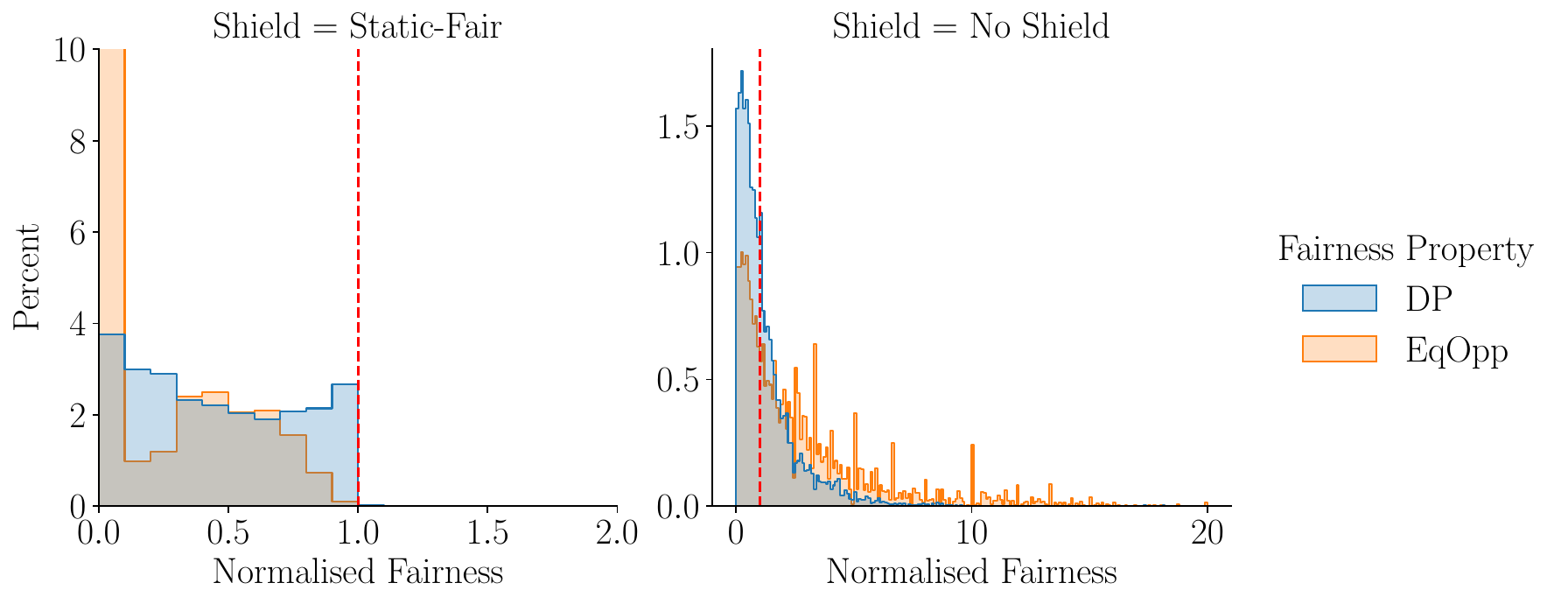}  %
    \caption{Distribution of normalized bias, i.e. Bias / $\kappa$, across all runs with (left) and without shield (right) for both DP and EqOpp.}
    \label{fig:fair-distr}
\end{figure*}

\begin{figure*}
   \centering
   \includegraphics[width=\linewidth]{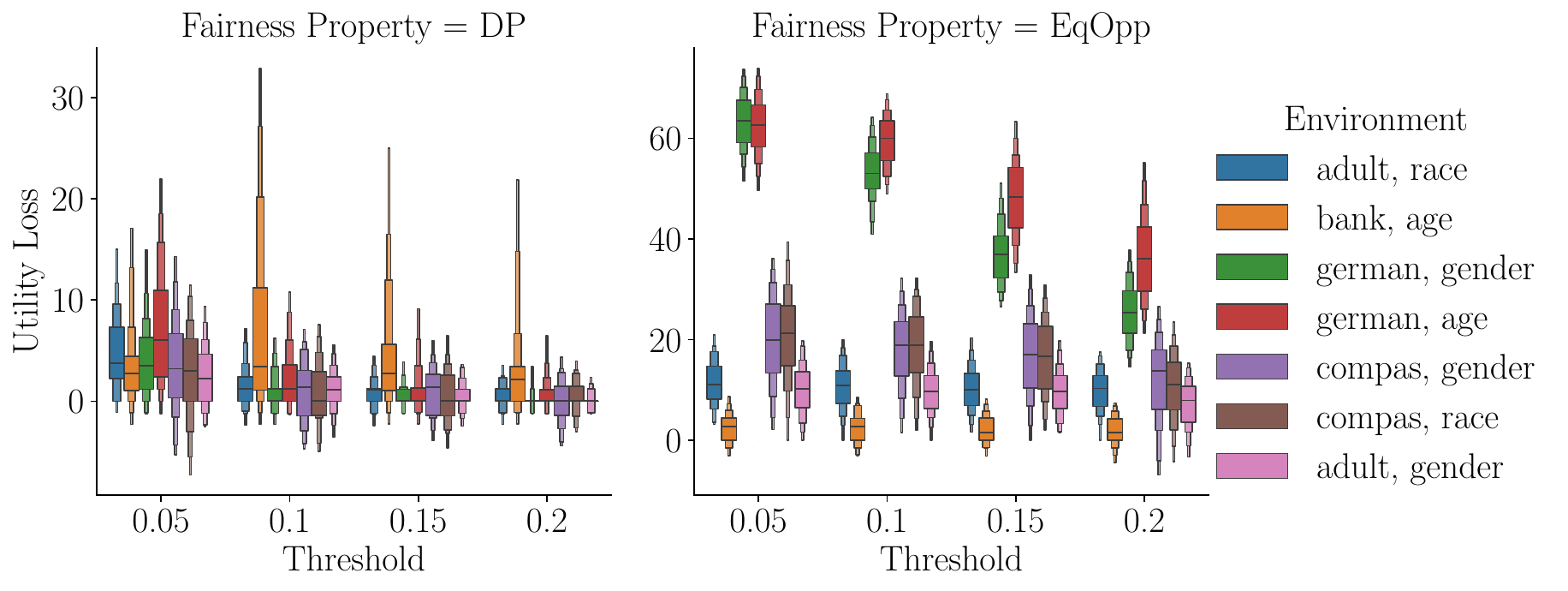}  %
    \caption{Distribution of utility loss (in $\%$) incurred by \FinShield aggregated across all ML Algorithms for DP (left) and EqOpp (right). The hight of the boxes indicate the spread of the distribution. }
    \label{fig:utility_environment_box}
\end{figure*}

\begin{figure*}
   \centering
   \includegraphics[width=\linewidth]{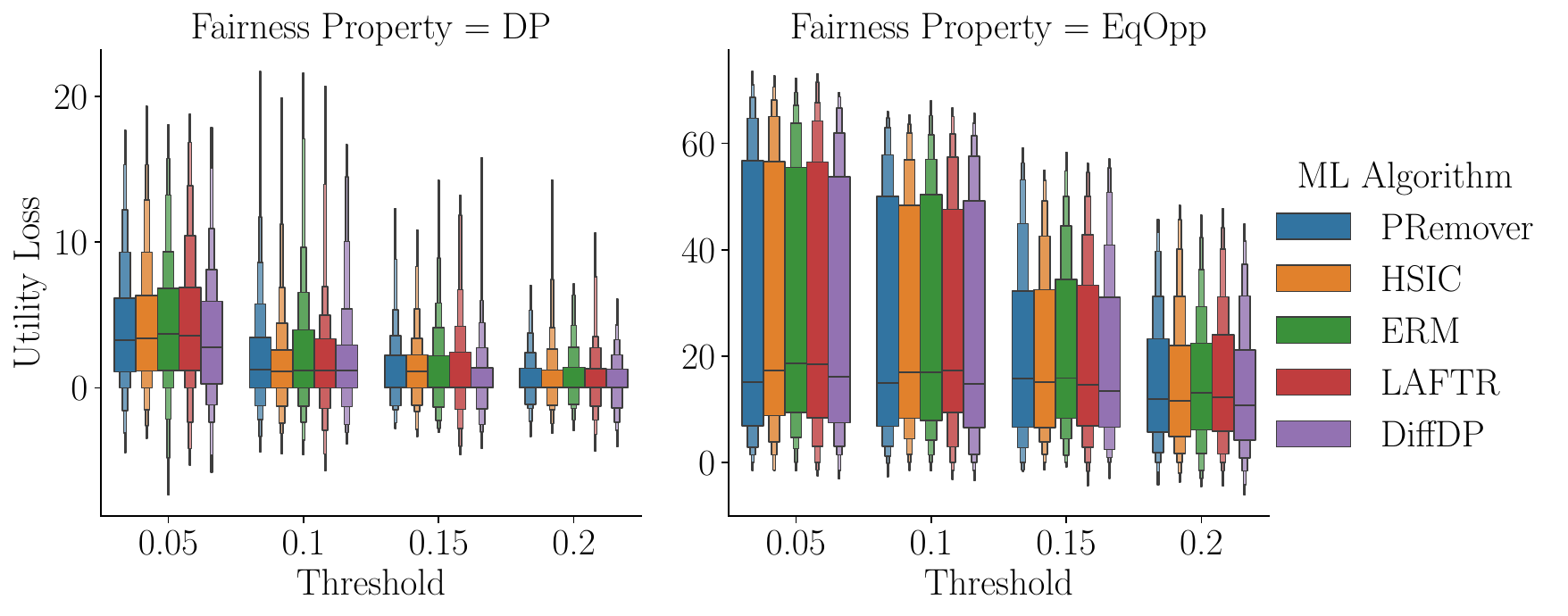}  %
    \caption{Distribution of utility loss (in $\%$) incurred by \FinShield aggregated across all environments for DP (left) and EqOpp (right). The hight of the boxes indicate the spread of the distribution. }
    \label{fig:utility_mlalgo_box}
\end{figure*}

\subsection{Periodic Shielding}
All the experiments are evaluated using the ERM ML Algorithm only. 

\paragraph{Fairness.}
Both Fig.~\ref{fig:fair-period} and Fig.~\ref{fig:fair-period-cond}
provide insight into the distribution of the normalized bias. 
The former aggregates over all runs, while the latter considers only those runs for \StaticBAR and \Dyn that satisfy the assumption. We can observe that \StaticBAR has a relatively high rate of violation in Fig.~\ref{fig:fair-period}, while at the same time being overly conservative in Fig.~\ref{fig:fair-period-cond}.
This problem does not exist with \Dyn as the median run is only slightly below the threshold. Moreover, Fig.~\ref{fig:fair-period-cond} indicates that no run satisfied the assumption of \StaticDP.
However, most importantly in all cases we can report an improvement over the unshielded runs.

\paragraph{Utility.}
In Fig~\ref{fig:utility-comparison-0.15} and Fig~\ref{fig:utility-comparison-0.2} we can observe the utility loss for $\kappa=0.15$ and $\kappa=0.2$ respectively.
We observe that the utility loss is low for both \StaticDP and \Dyn when compared to \StaticBAR and decreases with an increase in $\kappa$.
Fig.~\ref{fig:utility-periods} we depict the distribution of the percentage utility loss across periods. That is, for each run we normalize the utility loss per period by the total utility loss of the run. We can observe that for \Dyn the utility loss decreases in later periods. 
A trend not observed for the other shields. 
Moreover, we can observe that in some periods the shield actually increases the utility, i.e., we can observe a negative utility loss, which leads to some periods having above $100\%$ utility loss.

\begin{figure*}
   \centering
   \includegraphics[width=\linewidth]{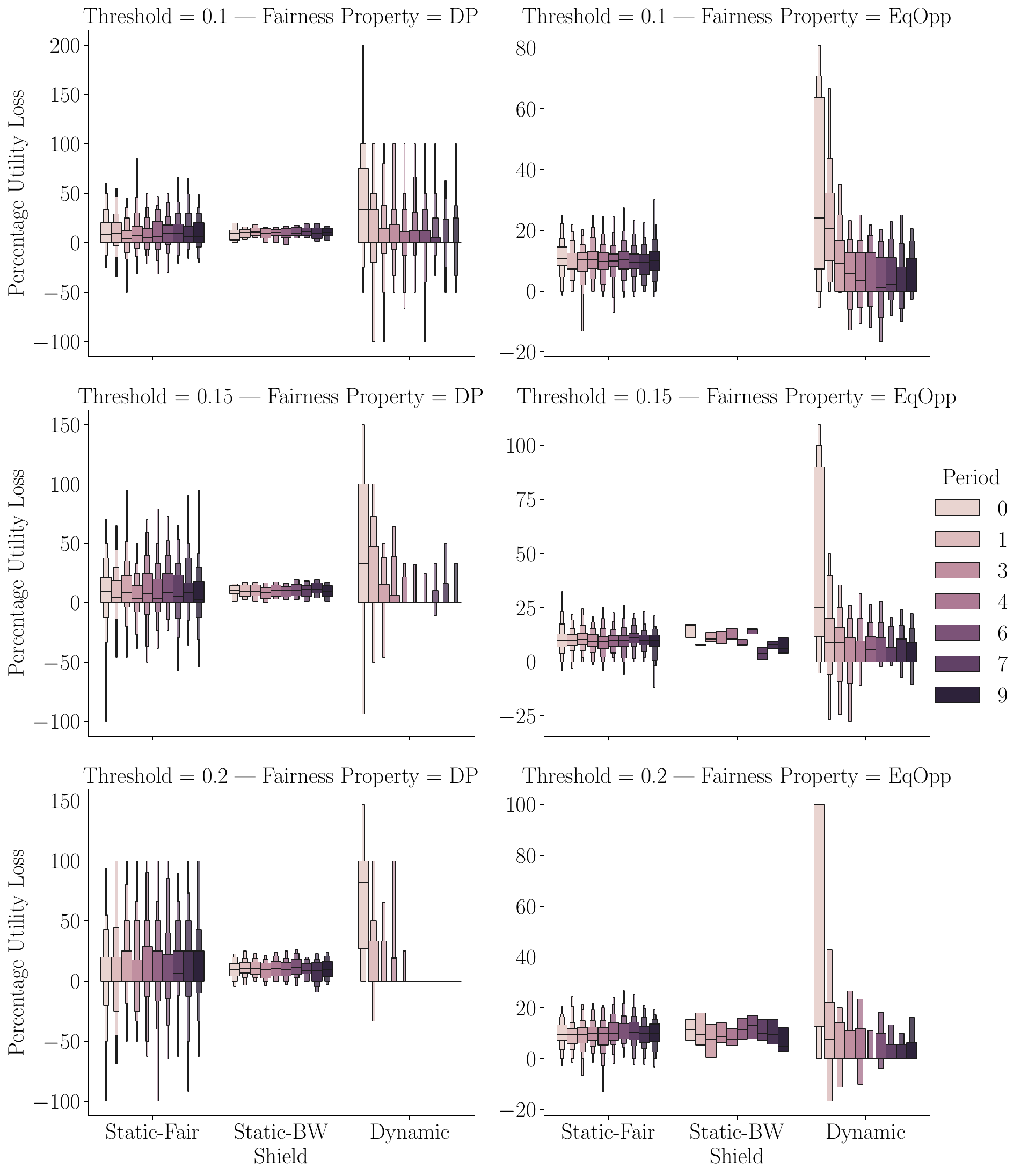}  %
    \caption{Percentage of total utility loss (in $\%$) for each period incurred by \StaticDP, \StaticDP and \Dyn across all runs for each $\kappa$. 
    DP (left) and EqOpp (right)}
    \label{fig:utility-periods}
\end{figure*}

\begin{figure*}
   \centering
   \includegraphics[width=\linewidth]{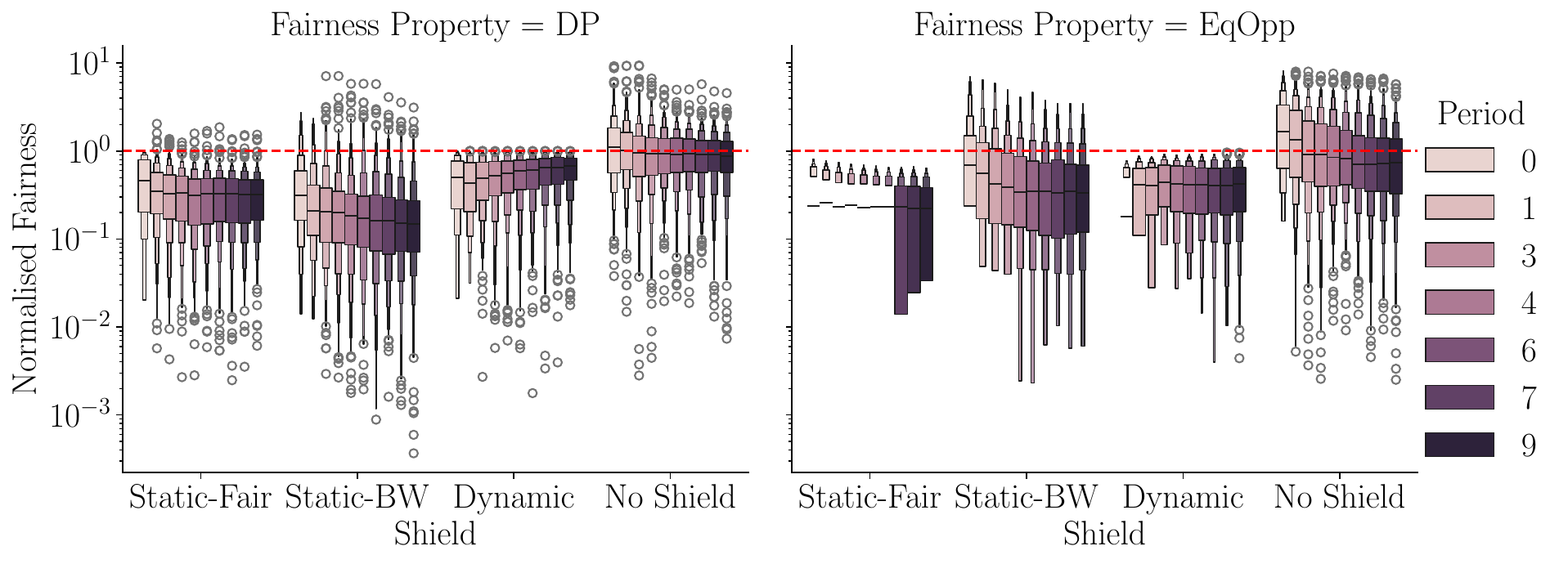}  %
    \caption{Distribution of normalized bias, i.e. Bias / $\kappa$, for each period for all runs. Each run below the red line satisfies the fairness condition. DP (left) and EqOpp (right)}
    \label{fig:fair-period}
\end{figure*}

\begin{figure*}
   \centering
   \includegraphics[width=\linewidth]{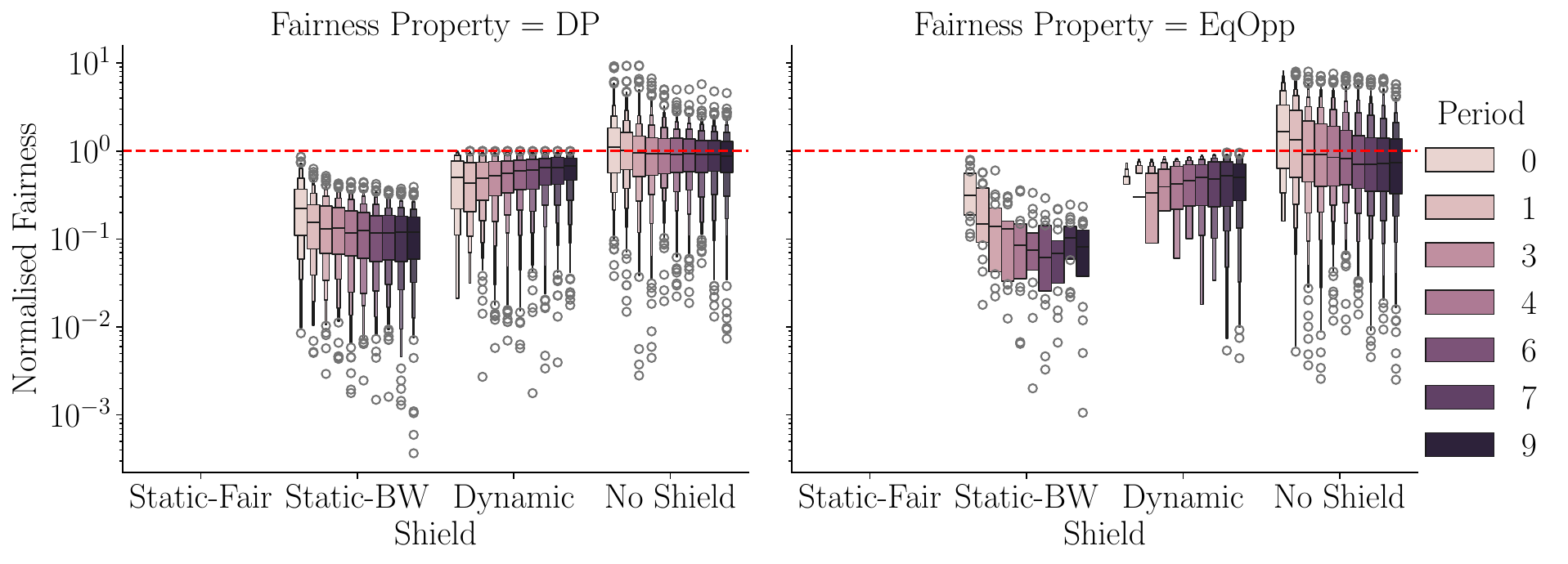}  %
    \caption{Distribution of normalized bias, i.e. Bias / $\kappa$, for each period for all runs \emph{where the assumption is satisfied}. Each run below the red line satisfies the fairness condition. DP (left) and EqOpp (right).}
    \label{fig:fair-period-cond}
\end{figure*}

}{
}

\clearpage







\end{document}